\newcommand{\vect}[1]{v(#1)}
\DeclareMathOperator{\hull}{hull}
\DeclareMathOperator{\lm}{\mathtt{LM}}
\newcommand{\tm}{\!\!\times\!\!}
\newcommand{\ig}[2]{\includegraphics[width=#1\textwidth]{./figures/#2}}
\newcommand{\cln}{\,:\,}
\newcommand{\N}{\mathbb N}
\newcommand{\B}{\mathcal B}
\newcommand{\LM}[1]{\mathtt{LM}(#1)}
\newcommand{\hide}[1]{}
\theoremstyle{plain}
\newtheorem{theorem}{Theorem}
\newtheorem{prop}{Proposition}
\theoremstyle{definition}
\newtheorem{example}{Example}
\newtheorem{definition}{Definition}
\theoremstyle{remark}
\begin{document}

\newcommand{\TitleText}{Optimizing Elimination Templates by Greedy Parameter Search}
\newcommand{\NameEM}{Evgeniy	Martyushev\\
South Ural State University\\
{\tt\small martiushevev@susu.ru}}
\newcommand{\NameJV}{Jana Vrablikova\\
 Department of Algebra \\MFF, Charles University\\
{\tt\small j.vrablikov@gmail.com}}
\newcommand{\NameTP}{Tomas Pajdla\\
CIIRC - CTU in Prague\\
{\tt\small pajdla@cvut.cz}}
\title{\TitleText\thanks{The research was supported by projects EU RDF IMPACT No.~CZ.02.1.01/0.0/0.0/15 003/0000468 and EU H2020 No.~871245 SPRING. T.~Pajdla is with the Czech Institute of Informatics, Robotics and Cybernetics, Czech Technical University in Prague.}}

\author{\NameEM \and \NameJV \and \NameTP}

\maketitle

\begin{abstract}
\noindent We propose a new method for constructing elimination templates for efficient polynomial system solving of minimal problems in structure from motion, image matching, and camera tracking. We first construct a particular affine parameterization of the elimination templates for systems with a finite number of distinct solutions. Then, we use a heuristic greedy optimization strategy over the space of parameters to get a template with a small size. We test our method on 34 minimal problems in computer vision. For all of them, we found the templates either of the same or smaller size compared to the state-of-the-art. For some difficult examples, our templates are, e.g., 2.1, 2.5, 3.8, 6.6 times smaller. For the problem of refractive absolute pose estimation with unknown focal length, we have found a template that is 20 times smaller. Our experiments on synthetic data also show that the new solvers are fast and numerically accurate. We also present a fast and numerically accurate solver for the problem of relative pose estimation with unknown common focal length and radial distortion.
\end{abstract}

\section{Introduction}
\label{sec:intro}

Many tasks in 3D reconstruction~\cite{snavely2008modeling,schonberger2016structure} and camera tracking~\cite{nister2004visual,taira2018inloc} lead to solving minimal problems~\cite{nister2004efficient,stewenius2006recent,kukelova2008automatic,byrod2008column,li2006five,kuang2013pose,saurer2015minimal,agarwal2017existence,barath2018efficient,larsson2018beyond}, which can be formulated as systems of polynomial equations.

The state-of-the-art approach to efficient solving polynomial systems for minimal problems is to use symbolic-numeric solvers based on elimination templates~\cite{kukelova2008automatic,larsson2017efficient,bhayani2020sparse}. These solvers have two main parts. In the first offline part, an elimination template is constructed. The template consists of a map (formulas) from input data to a (Macaulay) coefficient matrix. The template is the same for different input generic (noisy) data. In the second online phase, the coefficient matrix is filled by the data of a particular problem, reduced by the Gauss--Jordan (G--J) elimination and used to construct an eigenvalue/eigenvector computation problem of an action matrix that delivers the solutions of the system.

While the offline phase is not time critical, the online phase has to be computed very fast (mostly in sub-millisecond time) to be useful for robust optimization based on RANSAC schemes~\cite{fischler1981random}. Therefore, it is important to build templates (\ie, Macaulay matrices) that are as small as possible to make the G--J elimination fast. Besides the size, we also need to pay attention to building templates that lead to numerically stable computation.

\subsection{Contribution}

We develop a new approach to constructing elimination templates for efficiently solving minimal problems. First, using the general syzygy-based parameterization of elimination templates from~\cite{larsson2017efficient}, we construct a partial (but still generic enough) parameterization of templates. Then, we apply a greedy heuristic optimization over the space of parameters to find as small a template as possible.

We demonstrate our method on 34 minimal problems in geometric computer vision. For all of them, we found the templates either of the same or smaller size compared to the state-of-the-art. For some difficult examples, our templates are, \eg, 2.1, 2.5, 3.8, 6.6 times smaller. For the problem of refractive absolute pose estimation with unknown focal length, we have found a template that is 20 times smaller. Our experiments on synthetic data also show that the new solvers are fast and numerically accurate.

We propose a practical solver for the problem of relative pose estimation with unknown common focal length and radial distortion. All previously presented solvers for this problem are either extremely slow or numerically unstable.

\subsection{Related work}

Elimination templates are matrices that encode the transformation from polynomials of the initial system to polynomials needed to construct the action matrix. Knowing an action matrix, the solutions of the system are computed from its eigenvectors. \emph{Automatic generator} (AG) is an algorithm that inputs a polynomial system and outputs an elimination template for the action matrix computation.

\noindent\textbf{Automatic generators:} The first automatic generator was built in~\cite{kukelova2008automatic}, where the template was constructed iteratively by expanding the initial polynomials with their multiples of increasing degree. This AG has been widely used by the computer vision community to construct polynomial solvers for a variety of minimal problems, \eg,~\cite{bujnak20093d,bujnak2010new,kukelova2013fast,li20134,zheng2015minimal,saurer2015minimal,nakano2015globally}, see also~\cite[Tab.~1]{larsson2017efficient}. Paper~\cite{larsson2017efficient} introduced a non-iterative AG based on tracing the Gr\"obner basis construction and subsequent syzygy-based reduction. This AG allowed fast constructing templates even for hard problems. An alternative AG based on using sparse resultants was recently proposed in~\cite{bhayani2020sparse}. This method, along with~\cite{larsson2018beyond}, are currently the state-of-the-art automatic template generators.

\noindent\textbf{Improving stability:} The standard way of constructing the action matrix from a template requires performing its LU decomposition. For large templates, this operation often leads to significant round-off and truncation errors and hence to numerical instabilities. The series of papers~\cite{byrod2007improving,byrod2008column,byrod2009fast} addressed this problem and proposed several methods of improving stability, \eg, by performing a QR decomposition with column pivoting on the step of constructing the action matrix from a template.

\noindent\textbf{Optimizing formulations:} Choosing a proper formulation of a minimal problem can drastically simplify finding its solutions. Paper~\cite{kukelova2017clever} proposed the variable elimination strategy that reduces the number of unknowns in the initial polynomial system. For some problems, this strategy led to notably smaller templates~\cite{larsson2017making,kileel2018distortion}.

\noindent\textbf{Optimizing templates:} Much effort has been spent on speeding up the action matrix method by optimizing the template construction step. Paper~\cite{naroditsky2011optimizing} introduced a method of optimizing templates by removing some unnecessary rows and columns. The method~\cite{kukelova2014singly} utilized the sparsity of elimination templates by converting a large sparse template into the so-called singly-bordered block-diagonal form. This allowed splitting the initial problem into several smaller subproblems, which are easier to solve. In paper~\cite{larsson2018beyond}, the authors proposed two methods that significantly reduced the sizes of elimination templates. The first method used the so-called Gr\"obner fan of a polynomial ideal for constructing templates \wrt all possible standard bases of the quotient space. The second method went beyond Gr\"obner bases and introduced a random sampling strategy for constructing non-standard bases.

\noindent\textbf{Optimizing root solving:} Complex roots are spurious for most problems arising in applications. Paper~\cite{bujnak2012making} introduced two methods of avoiding the computation of complex roots, which resulted in a significant speed-up of polynomial solvers.

\noindent\textbf{Discovering symmetries:} Polynomial systems for certain minimal problems may have hidden symmetries. Uncovering these symmetries is another way of optimizing templates. This approach was demonstrated for the simplest partial $p$-fold symmetries in~\cite{kuang2014partial,larsson2016uncovering}. A more general case was recently investigated in~\cite{duff2021galois}. Paper~\cite{larsson2017polynomial} proposed a method of handling special polynomial systems with a (possibly) infinite subset of spurious solutions.

\noindent\textbf{The most related work:} Our work is essentially based on the results of papers~\cite{byrod2009fast,larsson2017efficient,larsson2018beyond,bhayani2020sparse}.

\section{Solving polynomial systems by templates}

Here we review solving polynomial systems with a finite number of solutions by eigendecomposition of action matrices. We also show how are the action matrices constructed using elimination templates in computer vision. We build on nomenclature from~\cite{cox2006using,Cox-IVA-2015,byrod2007improving}.

\subsection{Gr\"obner bases and action matrices}
\label{subs:gb}

Here we introduce action matrices and explain how they are related to Gr\"obner bases. 

We use $\mathbb K$ for a field, $X = \{x_1, \ldots, x_k\}$ for a set of $k$ variables, $[X]$ for the set of \emph{monomials} in $X$ and $\mathbb K[X]$ for the polynomial ring over $\mathbb K$. Let $F = \{f_1, \ldots, f_s\} \subset \mathbb K[X]$ and $J = \langle F\rangle$ for the ideal generated by $F$. A set $G \subset \mathbb K[X]$ is a \emph{Gr\"obner basis} of ideal $J$ if $J = \langle G \rangle$ and for every $f \in J \setminus \{0\}$ there is $g \in G$ such that the leading monomial of $g$ divides the leading monomial of $f$. The Gr\"obner basis $G$ is called \emph{reduced} if $c(g, \lm(g)) = 1$\footnote{We denote the coefficient of $g$ at $m$ by $c(g, m)$.} for all $g \in G$ and $\lm(g)$ does not divide any monomial of $g' \in G$ when $g' \neq g$.

For a fixed monomial ordering (see SM Sec.~\ref{sec:basic}), the reduced Gr\"obner basis is defined uniquely for each ideal. Moreover, for any polynomial ideal $J$, there are finitely many distinct reduced Gr\"obner bases, which all can be found using the \emph{Gr\"obner fan} of $J$~\cite{mora1988grobner,larsson2018beyond}.

For an ideal $J \subset \mathbb K[X]$, the \emph{quotient ring}
$\mathbb K[X]/J$ consists of all equivalence classes $[f]$ under the equivalence relation $f \sim g$ iff $f - g \in J$. If $J = \langle F \rangle$ is zero-dimensional, \ie, the set of roots of $F = 0$ is finite, then $\mathbb K[X]/J$ is a finite-dimensional vector space. Moreover, $\dim \mathbb K[X]/J$ equals the number of solutions to $F = 0$, when counting the multiplicities~\cite{Cox-IVA-2015}.

Given a Gr\"obner basis $G$ of ideal $J$, we can construct the \emph{standard (linear) basis} $\mathcal B$ of the quotient ring $\mathbb K[X]/J$ as the set of all monomials not divisible by any leading monomial from $G$, \ie,
$
\mathcal B = \{b \cln \lm(g) \nmid b, \forall g \in G\}.
$

Fix a polynomial $a \in \mathbb K[X]$ and define the linear operator
\[
T_a \colon \mathbb K[X]/J \to \mathbb K[X]/J \colon [f] \mapsto [a\cdot f].
\]
Selecting a basis in $\mathbb K[X]/J$, \eg, the standard one, allows to represent the operator $T_a$ as a $d\times d$ matrix, where $d = \dim \mathbb K[X]/J$. This matrix, which is also denoted by $T_a$, is called the \emph{action matrix} and the polynomial $a$ is called the \emph{action polynomial}.

The action matrix can be found using a Gr\"obner basis $G$ of ideal $J$ as follows. Let $\{b_1, \ldots, b_d\}$ be a basis in the quotient ring $\mathbb K[X]/J$. For a given $a$, we use $G$ to construct the \emph{normal forms} of $a\,b_i$:
\[
\overline{(a\, b_i)}^{G} = \sum_j t_{ij} b_j, \quad i = 1, \ldots, d,
\]
where $t_{ij} \in \mathbb K$. Then, we have $T_a = (t_{ij})$.

\subsection{Solving polynomial systems by action matrices}

Action matrices are useful for computing the solutions of polynomial systems with a finite number $d$ of solutions. The situation is particularly simple when (i) all solutions $p_j \in \mathbb K^k$, $j = 1, \ldots, d$, are of multiplicity one and (ii) the action polynomial $a$ evaluates to pairwise different values on the solutions, \ie, $a(p) \neq a(q)$ for all solutions $p \neq q$. Then, the action matrix $T_a$ has $d$ one-dimensional eigenspaces, and $d$ vectors $\begin{bmatrix}b_1(p_j) & \ldots & b_d(p_j)\end{bmatrix}^\top$ of polynomials $b_i$ evaluated at the solutions $p_j$, $i, j = 1, \ldots, d$, are basic vectors of the $d$ eigenspaces~\cite[p.~59 Prop.~4.7]{cox2006using}. Having one-dimensional eigenspaces leads to a straightforward method for extracting all solutions $p_j$. Thus, the classical approach to finding solutions to a polynomial system $F$ with a finite number of solutions is as follows.

\noindent {\bf 1. Choose an action polynomial $a$:} Assuming that the solutions $p_j$ are of multiplicity one, \ie, the ideal $J = \langle F \rangle$ is radical~\cite[p.~253 Prop.~7]{Cox-IVA-2015}, our goal is to choose $a$ such that it has pairwise different values $a(p_j)$. This is always possible by choosing $a = x_\ell$, \ie, a variable, after a linear change of coordinates~\cite[p.~59]{cox2006using}. As we will see, such a choice leads to a simple solving method. 

In computer vision, we are particularly interested in solving polynomial systems that consist of the union of two sets of equations $F = F_1 \cup F_2$ where $F_1$ do not depend on the image measurements (\eg, Demazure constraints on the Essential matrix~\cite{demazure1988deux}) and $F_2$ depend on the image measurements affected by random noise (\eg, linear epipolar constraints on the Essential matrix~\cite{longuet81}). Then, the linear change of coordinates can be done only once in the offline phase to transform $F_1$. In the next, we will assume that there is $a$ with pairwise distinct values on the solutions $p_j$.

\noindent{\bf 2. Choose a basis $\mathcal B$ of ${\mathbb K}[X]/J$:} There are infinitely many bases of ${\mathbb K}[X]/J$. Our goal is to choose a basis that leads to a simple and numerically stable solving method. Elements of $\mathcal B$ are equivalence classes represented by polynomials that are $\mathbb K$-linear combinations of monomials. Hence, the simplest bases consist of equivalence classes represented by monomials. It is important that ${\mathbb K}[X]/J$ has a standard monomial basis~\cite{cox2006using,Sturmfels02solvingsystems} for each reduced Gr\"obner basis. In generic situations, the elements of $\mathcal B$ represented by monomials are equivalent to (infinitely) many different linear combinations of the standard monomials and thus provide (infinitely) many different vectors to construct (infinitely) many different bases of $\mathcal B$. In the following, we assume \emph{monomial bases}, \ie, the bases consisting of the classes represented by monomials.

\noindent{\bf 3. Construct the action matrix $T_a$ \wrt $\mathcal B$:} Once $a$ and $\mathcal B$ have been chosen, it is straightforward to construct $T_a$ by the process described in Sec.~\ref{subs:gb}. However, in computer vision, polynomial systems often have the same support for different values of their coefficients. Then, it is efficient~\cite{byrod2009fast,wiesinger2015thesis} to construct $T_a$ by (i) building a Macaulay matrix $M$ using a fixed procedure -- \emph{a template} -- designed in the offline phase, and then (ii) produce $T_a$ in the online phase by the G--J elimination of $M$~\cite{kukelova2008automatic,byrod2008column}. Our main contribution, Sec.~\ref{sec:constr} and Sec.~\ref{sec:red}, in this work is an efficient approach to constructing Macaulay matrices.

\noindent{\bf 4. Computing the eigenvectors $v_j$, $j = 1, \ldots, d$ of $T_a$:} Computing the eigenvectors of $T_a$ is a straightforward task when there are $d$ one-dimensional eigenspaces.

\noindent{\bf 5. Recovering the solutions from eigenvectors:} To find the solutions, it is enough to evaluate all unknowns $x_l$, $l = 1, \ldots, k$, on the solutions $p_j$. It can be done by writing unknowns $x_l$ in the standard basis $b_i$ as $x_l = \sum_i c_{li} b_i$. Then, $x_l(p_j) = \sum_i c_{li} b_i(p_j) = \sum_i c_{li} (v_j)_i$, where $(v_j)_i$ is the $i$th element of vector~$v_j$.

\subsection{Macaulay matrices and elimination templates}

Let us now introduce Macaulay matrices and elimination templates. 

To simplify the construction, we restrict ourselves to the following assumptions: (i) the elements of basis $\mathcal B$ are represented by monomials and (ii) the action polynomial $a$ is a monomial and $a \neq 1$.

Given an $s$-tuple of polynomials $F = (f_1, \ldots, f_s)$, let $[X]_F$ be the set of all monomials from $F$. Let the cardinality $\# [X]_F$ of $[X]_F$ be $n$. Then, the \emph{Macaulay matrix} $M(F) \in \mathbb K^{s\times n}$ has coefficient $c(f_i, m_j)$, with $m_j \in [X]_F$, in the $(i, j)$ element: ${M(F)}_{ij} = c(f_i, m_j)$.

A \emph{shift} of a polynomial $f$ is a multiple of $f$ by a monomial $m \in [X]$. Let $A = (A_1, \ldots, A_s)$ be an $s$-tuple of sets of monomials $A_j \subset [X]$. We define the \emph{set of shifts} of $F$ as
\begin{equation}
A\cdot F = \{m \cdot f_j \cln m \in A_j, f_j \in F\}.
\end{equation}
Let $\mathcal B$ be a monomial basis of the quotient ring $\mathbb K[X]/\langle F \rangle$ and $a$ be an action monomial. The sets $\mathcal B$, $\mathcal R = \{a\,b \cln b \in \mathcal B\} \setminus \mathcal B$, and $\mathcal E = [X]_{A\cdot F} \setminus (\mathcal R \cup \mathcal B)$ are the sets of {\em basic}, {\em reducible} and {\em excessive monomials}, respectively~\cite{byrod2009fast}.

\begin{definition}
\label{def:ElimTempl}
Let $\overline{\mathcal B} = \mathcal B \cap [X]_{A\cdot F}$. A Macaulay matrix $M(A \cdot F)$ with columns arranged in ordered blocks $M(A \cdot F) = \begin{bmatrix}
M_{\mathcal E} & M_{\mathcal R} & M_{\overline{\mathcal B}}\end{bmatrix}$ is called the \emph{elimination template} for $F$ \wrt $a$ if the following conditions hold true:
\begin{enumerate}
\item $\mathcal R \subset [X]_{A \cdot F}$;
\item the reduced row echelon form of $M(A\cdot F)$ is 
\begin{equation}
\label{eq:tildeM}
\widetilde M(A\cdot F) =
\begin{bmatrix}
* & 0 & * \\ 0 & I & \widetilde M_{\overline{\mathcal B}} \\
0 & 0 & 0
\end{bmatrix},
\end{equation}
where $*$ means a submatrix with arbitrary entries, $0$ is the zero matrix of a suitable size, $I$ is the identity matrix of order $\# \mathcal R$ and $\widetilde M_{\overline{\mathcal B}}$ is a matrix of size $\# \mathcal R \times \# \overline{\mathcal B}$.
\end{enumerate}
\end{definition}

\begin{theorem}
\label{thm:ElimTempl}
The elimination template is well defined, \ie, for any $s$-tuple of polynomials $F = (f_1, \ldots, f_s)$ such that ideal $\langle F \rangle$ is zero-dimensional, there exists a set of shifts $A \cdot F$ satisfying the conditions from Definition~\ref{def:ElimTempl}.
\end{theorem}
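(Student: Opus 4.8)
The plan is to read the required shifts straight off the reduction of the reducible monomials modulo $J=\langle F\rangle$, and then to obtain the block form~\eqref{eq:tildeM} by elementary linear algebra. Throughout, $\mathcal B$ denotes the given monomial basis of $\mathbb K[X]/J$ — which is finite, of dimension $d$, precisely because $J$ is zero-dimensional — $a$ is the action monomial, and $\mathcal R=\{a\,b\cln b\in\mathcal B\}\setminus\mathcal B$.

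First I would associate to each reducible monomial $r\in\mathcal R$ the unique representative of its class in $\operatorname{span}_{\mathbb K}\mathcal B$: since $\mathcal B$ is a basis of $\mathbb K[X]/J$, there are scalars $t_{r,b}$ with $r\equiv\sum_{b\in\mathcal B}t_{r,b}\,b\pmod J$, so that $p_r:=r-\sum_{b\in\mathcal B}t_{r,b}\,b\in J$ and $\operatorname{supp}(p_r)\subseteq\{r\}\cup\mathcal B$. Because $J=\langle F\rangle$, each $p_r$ can be written $p_r=\sum_{j=1}^{s}q_{r,j}f_j$; expanding the $q_{r,j}$ into monomials exhibits $p_r$ as a $\mathbb K$-linear combination of finitely many shifts $m\cdot f_j$. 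Collecting all the monomials $m$ that occur this way (over all $r\in\mathcal R$) into sets $A_j$ yields a finite $s$-tuple $A=(A_1,\dots,A_s)$ with $p_r\in\operatorname{span}_{\mathbb K}(A\cdot F)$ for every $r\in\mathcal R$; finiteness of $\mathcal B$ and of each $r$ guarantees that $A$, hence $[X]_{A\cdot F}$ and $M(A\cdot F)$, are finite.

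It then remains to check the two conditions of Definition~\ref{def:ElimTempl}. Condition~1 is immediate: $r\in\operatorname{supp}(p_r)$ and $p_r$ is a combination of shifts from $A\cdot F$, so $r$ occurs in at least one of them, i.e. $\mathcal R\subset[X]_{A\cdot F}$, and the columns partition as $\mathcal E\sqcup\mathcal R\sqcup\overline{\mathcal B}$. For condition~2, let $V$ be the row space of $M(A\cdot F)$, viewed in the monomial coordinates indexed by $[X]_{A\cdot F}$; every vector of $V$ is the coordinate vector of a polynomial of $J$ supported in $[X]_{A\cdot F}$. The vectors $p_r$ ($r\in\mathcal R$) lie in $V$, have entry $1$ in column $r$, entry $0$ in the other $\mathcal R$-columns, and vanish on $\mathcal E$; they are therefore linearly independent and span a subspace $W$ with $\dim W=\#\mathcal R$ that projects onto all of $\mathbb K^{\mathcal R}$, whence $V=W\oplus V'$ with $V'$ the subspace of vectors in $V$ having zero $\mathcal R$-component. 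Since $\mathcal B$ is a basis of $\mathbb K[X]/J$, no nonzero element of $J$ is supported on $\overline{\mathcal B}$ alone, so $V'$ contains no nonzero vector supported only on $\overline{\mathcal B}$. Ordering the columns $\mathcal E<\mathcal R<\overline{\mathcal B}$, a row reduction of a basis of $V'$ then has all pivots in the $\mathcal E$-block and gives the top rows $\begin{bmatrix}* & 0 & *\end{bmatrix}$, the rows from $W$ reduce to $\begin{bmatrix}0 & I & \widetilde M_{\overline{\mathcal B}}\end{bmatrix}$ (pivots in the $\mathcal R$-columns, zero $\mathcal E$-part already present), and everything else vanishes — exactly~\eqref{eq:tildeM}.

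The construction is essentially forced; the only real ingredients are finiteness of $\mathcal B$ (i.e. zero-dimensionality of $\langle F\rangle$), which keeps $A$ and $M(A\cdot F)$ finite, and the defining linear-independence property of a quotient basis, which forbids spurious pivots in the $\overline{\mathcal B}$-columns. I expect the one place needing care in the write-up to be precisely this last linear-algebra step: verifying that the column order $\mathcal E<\mathcal R<\overline{\mathcal B}$ together with $p_r\in V$ forces the reduced row echelon form into the claimed pattern, in particular the vanishing of the $\mathcal R$-entries of the top block (automatic, as those are pivot columns) and of the $\mathcal E$-entries of the middle block.
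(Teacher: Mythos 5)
Your proof is correct and follows the same overall plan as the paper's: for each $r\in\mathcal R$, build a polynomial in $J$ supported on $\{r\}\cup\mathcal B$ out of $\mathbb K$-combinations of shifts of $F$, then read the block form~\eqref{eq:tildeM} off the resulting row space. There are, however, two worthwhile differences. The paper fixes a reduced Gr\"obner basis $G$ of $J$, takes $\mathcal B$ to be the associated \emph{standard} basis, computes the normal form $p_k$ of each $r_k$ by multivariate division by $G$, and rewrites $r_k-p_k$ in terms of $F$ via a change matrix $g_i=\sum_j q_{ij}f_j$; you instead allow an arbitrary monomial basis $\mathcal B$, obtain the normal form directly from the fact that $\mathcal B$ is a $\mathbb K$-linear basis of the quotient, and appeal to $J=\langle F\rangle$ for the existence of the $q_{r,j}$. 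This is a mild but real generalization -- it directly covers the non-standard bases the paper actually exploits in Sec.~\ref{sec:constr} -- at the price of being less constructive about where the $q_{r,j}$ come from. More importantly, your RREF argument is the more complete of the two: after establishing a pivot in every $\mathcal R$-column, the paper simply asserts that form~\eqref{eq:tildeM} follows, whereas you also exclude pivots in the $\overline{\mathcal B}$-columns by noting that such a pivot would place in the row space a nonzero vector supported only on $\overline{\mathcal B}$, \ie a nonzero element of $J$ that is a $\mathbb K$-combination of basis monomials, contradicting the linear independence of $\mathcal B$ modulo $J$. That step is genuinely needed to obtain the stated block form and is left implicit in the paper's write-up, so it is worth keeping.
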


\begin{proof}
See SM Sec.~\ref{sec:proof}.
\end{proof}

In SM Sec.~\ref{sec:examples}, we provide several examples of solving polynomial systems by elimination templates.

\subsection{Action matrices from elimination templates}
\label{subsec:actmat}

We will now explain how to construct action matrices from elimination templates.

Given a finite set $A \subset \mathbb K[X]$, let $\vect{A}$ denote the vector consisting of the elements of $A$. If $A$ is a set of monomials, then the elements of $\vect{A}$ are ordered by the chosen monomial ordering on $[X]$. For a set of polynomials, the order of elements in $\vect{A}$ is irrelevant.

Let $a \in [X]$ be an action monomial and let $M(A\cdot F) = \begin{bmatrix}
M_{\mathcal E} & M_{\mathcal R} & M_{\overline{\mathcal B}}
\end{bmatrix}$ be an elimination template for $F$ \wrt $a$. Denote for short $M = M(A \cdot F)$ and $\mathcal X = [X]_{A\cdot F}$ the set of monomials corresponding to columns of $M$. Since $M$ is a Macaulay matrix, $M\,\vect{\mathcal X} = 0$ represents the expanded system of equations.

It may happen that $\overline{\mathcal B} = \mathcal B \cap \mathcal X$ is a proper subset of $\mathcal B$, see Examples~\ref{exa:2conics} and~\ref{example5} in SM. Let us construct matrix $M'_{\mathcal B}$ by adding to $M_{\overline{\mathcal B}}$ the zero columns corresponding to each $b \in \mathcal B \setminus \overline{\mathcal B}$. Then, the template $M$ is transformed into
$
M' = \begin{bmatrix}
M_{\mathcal E} & M_{\mathcal R} & M'_{\mathcal B}
\end{bmatrix},
$
which is clearly a template too. Therefore, the reduced row echelon form of $M'$ must be of the form~\eqref{eq:tildeM}. Thus we are getting
\begin{equation}
\label{eq:reducible}
\vect{\mathcal R} = -\widetilde M'_{\mathcal B}\; \vect{\mathcal B}.
\end{equation}

To provide an explicit formula for the action matrix, let the set of basic monomials $\mathcal B$ be partitioned as $\mathcal B = \mathcal B_1 \cup \mathcal B_2$, where $\mathcal B_2 = \{a\, b \cln b \in \mathcal B\} \cap \mathcal B$ and $\mathcal B_1 = \mathcal B \setminus \mathcal B_2$. Then $\vect{\mathcal B} = \begin{bmatrix}\vect{\mathcal B_1} \\ \vect{\mathcal B_2}\end{bmatrix}$ and the action matrix can be read off as follows:
\begin{equation}
\label{eq:actmat}
T_a = \begin{bmatrix} -\widetilde M'_{\mathcal B} \\ P \end{bmatrix},
\end{equation}
where $P$ is a binary matrix, \ie, a matrix consisting of $0$ and $1$, such that $\vect{\mathcal B_2} = P\,\vect{\mathcal B}$.

\section{Constructing parameterized templates}
\label{sec:constr}

Let $\B$ be a monomial basis of $\mathbb K[X]/J$. We distinguish a standard basis, which comes from a given Gr\"obner basis of $J$, and a non-standard basis, which may be represented by arbitrary monomials from $[X]$. Given a polynomial $f \in \mathbb K[X]$, let $[f] = \sum_i c_i [b_i]$, where $b_i \in \B$ and $c_i \in \mathbb K$, be the unique representation of $[f]$ in the basis $\B$. Then, the polynomial $\sum_i c_i b_i$ is called the \emph{normal form} for $f$ \wrt $\B$ and is denoted by $\overline f^{\mathcal B}$. Let us fix the action monomial $a$, and construct $\overline{a\, \vect{\B}}^\B$, \ie, the vector of normal forms for each $a\, b_i$. If $\B$ is the standard basis corresponding to a Gr\"obner basis $G$, the normal form \wrt $\B$ is found in a straightforward way as the unique remainder after dividing by polynomials from $G$, \ie, $\overline{a\, \vect{\B}}^\B = \overline{a\,\vect{\B}}^G = T_a\, \vect{\B}$, where $T_a \in \mathbb K^{d\times d}$ is the action matrix.

Now, consider an arbitrary (possibly non-standard) basis $\B$. To construct the normal form for $a\, \vect{\B}$ \wrt $\B$, we select a Gr\"obner basis $G$ of ideal $J$ and find the related (standard) basis $\widehat \B$. Then, we get
$
\overline{\vect{\B}}^G = S\, \vect{\widehat \B}.
$
As $\B$ is a basis, the square matrix $S$ is invertible. We can also compute $\overline{a\, \vect{\widehat \B}}^G = \widehat T_a \vect{\widehat \B}$, where $\widehat T_a \in \mathbb K^{d\times d}$ is the matrix of the action operator in the standard basis $\widehat \B$. Then, we have
$
\overline{a\, \vect{\B}}^\B = T_a \vect{\B},
$
where $T_a = S\,\widehat T_a S^{-1}$ is the matrix of the action operator in the basis $\B$.

Let us define
\begin{equation}
\label{eq:Vdef}
V = a\, \vect{\B} - T_a\vect{\B}
\end{equation}
and compute
\begin{multline*}
\overline V^G = S \bigl[\overline{a\cdot (S^{-1}\overline{\vect{\B}}^G)}^G - \widehat T_a S^{-1}\, \overline{\vect{\B}}^G\bigr] \\= S \bigl[\overline{a\, \vect{\widehat \B}}^G - \widehat T_a \vect{\widehat \B}\bigr] = 0.
\end{multline*}
It follows that the elements of vector $V$ belong to $J$. Therefore, there is a matrix $H \in \mathbb K[X]^{d\times s}$ such that
\begin{equation}
\label{eq:Vrepr}
V = H \vect{F}.
\end{equation}
Knowing matrix $H$ is enough for constructing an elimination template for $F$ according to Definition~\ref{def:ElimTempl}. Equation~\eqref{eq:Vrepr} can be rewritten in the form $V = \sum_k h_k f_k$, where $h_k$ is the $k$th column of $H$. Let $[X]_k$ be the support of $h_k$, $A = ([X]_1, \ldots, [X]_s)$ and $A\cdot F$ be the related set of shifts. Then, the Macaulay matrix $M(A\cdot F)$ is the elimination template for $F$, see SM Sec.~\ref{sec:proof}.

Now, we discuss how to construct the matrix $H$ so that~\eqref{eq:Vrepr} holds true. As noted in~\cite{larsson2017efficient}, such matrix is not defined uniquely, reflecting the ambiguity in constructing elimination templates. One such matrix, say $H_0$, can be found as a byproduct of the Gr\"obner basis computation.\footnote{In practice, matrix $H_0$ can be derived by using an additional option in the Gr\"obner basis computation command, \eg, \texttt{ChangeMatrix=>true} in Macaulay2~\cite{macaulay2} or \texttt{output=extended} in Maple.} On the other hand, there is a simple algorithm for computing generators of the first syzygy module of any finite set of polynomials~\cite{cox2006using}. For the $s$-tuple of polynomials $F$, the algorithm outputs a matrix $H_1 \in \mathbb K[X]^{l\times s}$ such that $H_1 \vect{F} = 0$. Let
\begin{equation}
\label{eq:matH}
H = H_0 + \Theta H_1,
\end{equation}
where $\Theta$ is a $d\times l$ matrix of parameters $\theta_{ij} \in \mathbb K$. We call the elimination template associated with the matrix $H$ the \emph{parametrized elimination template}.

We note that since the rows of matrix $H_1$ generate the syzygy module, formula~\eqref{eq:matH} would give us the complete set of solutions to Eq.~\eqref{eq:Vrepr} provided that $\theta_{ij} \in \mathbb K[X]$. However, in this paper we restrict ourselves to the much simpler case $\theta_{ij} \in \mathbb K$.

In general, the parametrized template may be very large. In the next section we propose several approaches for its reduction.

\section{Reduction of the template}
\label{sec:red}

\subsection{Adjusting parameters by a greedy search}

The $k$th column of matrix $H$, defined in~\eqref{eq:matH}, can be written as
$
h_k = Z_k c_k,
$
where $Z_k$ is the $k$th coefficient matrix whose entries are affine functions in the parameters $\theta_{ij}$ and $c_k$ is the related monomial vector. Let
$
W = \begin{bmatrix}Z_1 & \ldots & Z_s\end{bmatrix}.
$
The columns of matrix $W$ are in one-to-one correspondence with the shifts of polynomials in the expanded system and hence with the rows of the elimination template. Thus, the problem of template reduction leads to the combinatorial optimization problem of adjusting the parameters with the aim of minimizing the number of non-zero columns in $W$. Below we propose two heuristic strategies for handling this problem. We call the first strategy ``row-wise'' as it tends to remove the rows of the template. The second strategy is ``column-wise'' as it removes the columns of $W$ that correspond to excessive monomials and hence to columns of the template.

First, we notice that if a column of matrix $W$ contains an entry, which is a nonzero scalar, then this column can not be zeroed out by adjusting the parameters. Hence, we further assume that all such columns were removed from $W$. 

\noindent\textbf{Row-wise reduction:}
Let $w_k$ be the $k$th column of matrix $W$. To zero out a column of matrix $W$ means to solve linear equations $w_k = 0$. As each row of $W$ has its own set of parameters, solving $w_k = 0$ splits into solving $d$ single equations. For each $k$, we assign to $w_k$ the score $\sigma(k)$ which is the number of columns that are zeroed out by solving $w_k = 0$. Our row-wise greedy strategy implies that at each step we zero out the column with the maximal score. We proceed while $\sigma(k) > 0$ for at least one $k$.

\noindent\textbf{Column-wise reduction:}
Let $\mathcal E$ be the set of excessive monomials for the parametrized elimination template. For each $e \in \mathcal E$, we denote by $\mathcal W_e$ the subset of columns of $W$ such that the respective shifts contain $e$. For each $e \in \mathcal E$, we assign to $e$ the score $\sigma(e)$ which is the number of columns that are zeroed out by solving $w = 0$ for all $w \in \mathcal W_e$. Our column-wise greedy strategy implies that at each step we zero out the columns from $\mathcal W_e$ corresponding to the excessive monomial $e$ with the maximal score. We proceed while $\sigma(e) > 0$ for at least one $e$.

The column-wise strategy is faster as it zeroes out several columns of matrix $W$ at each step. On the other hand, the row-wise strategy outputs smaller templates for some cases. Our automatic template generator tries both strategies and outputs the smallest template.

In Fig.~\ref{fig1}, we compare our adjusting strategy with the template reduction method from~\cite{larsson2017efficient} on several minimal problems. Each box plot on the figure represents the distribution of the normalized template sizes for $100$ standard monomial bases corresponding to randomly selected monomial orderings. The action variable for each basis is also taken randomly. The problem instance is the same (fixed) for each problem. For visibility, we also show the sizes of the parametrized templates before applying any reductions.

Our reduction method produces smaller elimination templates in most cases. It can be seen that for some cases the syzygy-based reduction produces templates which are larger than the parametrized templates.

\begin{figure}
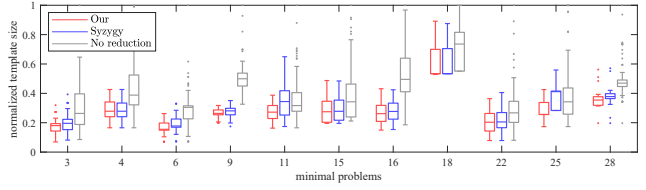

\centering
\ig{0.48}{fig1-crop}
\caption{A comparison of our adjusting strategy (Our) with the syzygy-based reduction from~\cite{larsson2017efficient} (Syzygy). We also show sizes of the initial templates before reduction (No reduction). Each box plot represents the distribution of the normalized template sizes for $100$ randomly selected standard monomial bases. The action variable for each basis is also taken randomly. The problem numbering is the same as in Tab.~\ref{tab:templates} and Tab.~\ref{tab:templates2}.}
\label{fig1}
\end{figure}

\subsection{Schur complement reduction}
\label{subsec:schur}

\begin{prop}
\label{prop:schur}
Let $M$ be an elimination template represented in the following block form
\begin{equation}
\label{eq:blockM}
M = \begin{bmatrix}A & B \\ C & D\end{bmatrix},
\end{equation}
where $A$ is a square invertible matrix and its columns correspond to some excessive monomials. Then the Schur complement of $A$, \ie, matrix $M/A = D - CA^{-1}B$, is an elimination template too.
\end{prop}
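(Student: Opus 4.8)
The plan is to realize $M/A$, via block Gaussian elimination, as essentially a sub-block of the reduced row echelon form of $M$, and then to read off the required structure from the known form~\eqref{eq:tildeM}.

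First I would reorder the rows and columns of $M$ so that the $r$ rows and $r$ columns occupied by $A$ come first. This is harmless: a template is only determined up to row operations, reordering columns merely relabels monomials, and since the columns of $A$ are excessive they may be placed at the front of the $\mathcal E$-block, so the ordered block structure $[\mathcal E\mid\mathcal R\mid\overline{\mathcal B}]$ with $\mathcal E=\mathcal E_A\sqcup\mathcal E'$ is preserved. Because $A$ is invertible, I then left-multiply by the invertible matrix $\begin{bmatrix}A^{-1}&0\\-CA^{-1}&I\end{bmatrix}$, obtaining $\begin{bmatrix}I&A^{-1}B\\0&M/A\end{bmatrix}$. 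This has the same row space — hence the same reduced row echelon form — as $M$; its rows are $\mathbb K$-linear combinations of rows of $M$ (so the rows of $M/A$ still represent polynomials of $\langle F\rangle$); and its columns are exactly those of $M$ with the $\mathcal E_A$-columns removed. In particular all of $\mathcal R$ and all of $\overline{\mathcal B}$ survive, which gives condition~1 of Definition~\ref{def:ElimTempl} immediately, with the excessive set shrunk to $\mathcal E'$ and $\mathcal R$, $\overline{\mathcal B}$ unchanged.

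Next I would compute the reduced row echelon form. Let $\widetilde Y$ be the reduced row echelon form of $M/A$; performing the same row operations inside the bottom block turns $\begin{bmatrix}I&A^{-1}B\\0&M/A\end{bmatrix}$ into $\begin{bmatrix}I&A^{-1}B\\0&\widetilde Y\end{bmatrix}$, and subtracting suitable multiples of the pivot rows of $\widetilde Y$ from the top $r$ rows clears the entries of $A^{-1}B$ sitting above pivots of $\widetilde Y$ (without disturbing the leading $I_r$, as those pivot rows vanish on the first $r$ columns). This produces $\begin{bmatrix}I&X'\\0&\widetilde Y\end{bmatrix}$ with $X'$ zero in every pivot column of $\widetilde Y$. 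The point that needs care is the observation that $\begin{bmatrix}I&X'\\0&\widetilde Y\end{bmatrix}$ is \emph{already} in reduced row echelon form: its pivot columns are the first $r$ columns together with the pivot columns of $\widetilde Y$, in each of them the entry is $1$ with zeros above and below, and the leftmost nonzero entry of every row is its pivot. By uniqueness of the reduced row echelon form, it equals $\widetilde M(A\cdot F)$.

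Finally I would compare $\begin{bmatrix}I&X'\\0&\widetilde Y\end{bmatrix}$ with~\eqref{eq:tildeM} under the column blocks $[\mathcal E_A\mid\mathcal E'\mid\mathcal R\mid\overline{\mathcal B}]$. The first $r$ columns of~\eqref{eq:tildeM} must then equal $\begin{bmatrix}I_r\\0\end{bmatrix}$, so the top $r$ rows are precisely the first $r$ rows of the top block of~\eqref{eq:tildeM}; deleting them together with the $\mathcal E_A$-columns leaves the remaining rows of~\eqref{eq:tildeM} restricted to $[\mathcal E'\mid\mathcal R\mid\overline{\mathcal B}]$, and this is $\widetilde Y$. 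Hence $\widetilde Y=\begin{bmatrix}*&0&*\\0&I&\widetilde M_{\overline{\mathcal B}}\\0&0&0\end{bmatrix}$ with the same identity block of order $\#\mathcal R$ and the same $\widetilde M_{\overline{\mathcal B}}$, which is condition~2 of Definition~\ref{def:ElimTempl} for $M/A$. Thus $M/A$ is again an elimination template, and since $\widetilde M_{\overline{\mathcal B}}$ is unchanged it yields the same action matrix via~\eqref{eq:actmat}. I expect the main obstacle to be exactly the ``it is already in reduced echelon form'' step — equivalently, the fact that the columns of $A$ are pivot columns of $M$, which also drops out by restricting $M$ to its first $r$ rows and using linear independence of the columns of $A$; the rest is bookkeeping about row-space and ideal-membership preservation under block elimination and about the fate of the monomial blocks.
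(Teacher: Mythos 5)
Your proof is correct and takes essentially the same route as the paper's: both identify the RREF of $M$ as $\begin{bmatrix}I_r & *\\ 0 & \widetilde D\end{bmatrix}$, with $\widetilde D$ equal to the RREF of $M/A$, and then read off the required template structure of $\widetilde D$ by comparison with~\eqref{eq:tildeM}. You in fact spell out two steps the paper asserts without justification — the pivot-column argument (the first $r$ columns of $M$, being independent since $A$ is invertible, must all be pivot columns, whence the top-left block of $\widetilde M$ is $I_r$), and the block-elimination identity $\begin{bmatrix}A^{-1}&0\\-CA^{-1}&I\end{bmatrix}M=\begin{bmatrix}I&A^{-1}B\\0&M/A\end{bmatrix}$ which makes rigorous that $\widetilde{M/A}=\widetilde D$.
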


\begin{proof}
See SM Sec.~\ref{sec:schur}.
\end{proof}

In practice, Prop.~\ref{prop:schur} can be used as follows. Suppose that the set of polynomials $F = \{f_1, \ldots, f_s\}$ contains a subset, say $F^* = \{f_1, \ldots, f_k\}$, such that (i) all polynomials from $F^*$ are sparse, \ie, consist of a relatively small number of terms, and (ii) the coefficients of polynomials from $F^*$ are unchanged for all instances of the problem. Such polynomials may arise, \eg, from the normalization condition. Let an elimination template $M$ for $F$ be represented in the block form~\eqref{eq:blockM}, where the submatrix $\begin{bmatrix}A & B\end{bmatrix}$ corresponds to the shifts of polynomials from $F^*$, matrix $A$ is square and invertible, its columns correspond to some excessive monomials and its entries are the same for all instances of the problem. Then, by Prop.~\ref{prop:schur}, we can safely reduce the template by replacing $M$ with the Schur complement $M/A$.

Since the polynomials from $F^*$ are sparse, the blocks $A$ and $B$ in~\eqref{eq:blockM} are sparse too. It follows that the nonzero entries of matrix $M/A$ are simple (polynomial) functions of the entries of $M$ that can be easily precomputed offline. The Schur complement reduction allows one to significantly reduce the template for some minimal problems, see Tab.~\ref{tab:templates} and Tab.~\ref{tab:templates2} below.

\subsection{Removing dependent rows and columns}

\begin{prop}
\label{prop:nco}
Let $M''$ be an elimination template of size $s''\times n''$ whose columns arranged \wrt the partition $\mathcal E \cup \mathcal R \cup \overline{\B}$. Then there exists a template $M$ of size $s\times n$ so that $s \leq s''$, $n \leq n''$ and $n - s = \# \overline{\B}$.
\end{prop}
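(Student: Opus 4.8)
The plan is to start from the template $M''$ and strip away redundancy in two stages: first the rows, then the columns, making sure that after each stage the defining properties of Definition~\ref{def:ElimTempl} are preserved. Write $M'' = \begin{bmatrix} M''_{\mathcal E} & M''_{\mathcal R} & M''_{\overline{\B}}\end{bmatrix}$ and recall that its reduced row echelon form has the shape~\eqref{eq:tildeM}: the rank of $M''$ is therefore $r := \#\mathcal E' + \#\mathcal R$ for some subset $\mathcal E' \subseteq \mathcal E$ of the excessive columns (those that pick up pivots), while the remaining $s'' - r$ rows are linear combinations of these.

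First I would remove dependent rows. Replace $M''$ by any maximal linearly independent set of its rows; call the result $M_1$, of size $s \times n''$ with $s = r \le s''$. Row operations do not change the row space, hence do not change the reduced row echelon form, so $M_1$ is still a template with the same column partition $\mathcal E \cup \mathcal R \cup \overline{\B}$. Now $M_1$ has full row rank $s$, and its echelon form~\eqref{eq:tildeM} has no zero rows; in particular $n'' - s$ equals the number of columns that are free (non-pivot), which by the block structure is exactly $\#\overline{\B}$ plus the number of non-pivot excessive columns $\#(\mathcal E \setminus \mathcal E')$.

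Second I would remove the dependent columns, namely those excessive columns that never carry a pivot. Let $\mathcal E' \subseteq \mathcal E$ be the pivot-carrying excessive columns and delete the columns indexed by $\mathcal E \setminus \mathcal E'$; call the result $M$, of size $s \times n$ with $n = s + \#\overline{\B}$ and $n \le n''$. The key point to check is that $M$ is still an elimination template: condition~1, $\mathcal R \subseteq [X]_{A\cdot F}$, is untouched since we deleted only excessive columns; for condition~2, observe that deleting non-pivot columns from a matrix in reduced row echelon form leaves it in reduced row echelon form, and the blocks $0, I, \widetilde M_{\overline{\B}}$ in~\eqref{eq:tildeM} are unaffected because none of the deleted columns supplied a pivot and none lay in the $\mathcal R$ or $\overline{\B}$ blocks. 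Thus $\widetilde M$ still has the form~\eqref{eq:tildeM} (now with the top-left $*$-block shrunk to the surviving excessive columns), so $M$ is a template of the claimed size.

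The main obstacle is the bookkeeping in the second stage: one must be careful that deleting excessive columns really does not disturb the pivot positions in the $\mathcal R$-block. This is where the ordered block arrangement $\begin{bmatrix} M_{\mathcal E} & M_{\mathcal R} & M_{\overline{\B}}\end{bmatrix}$ is used — the Gauss--Jordan process on $M$ and on $M_1$ can be made to choose the same pivots in the same order, so the identity block and $\widetilde M_{\overline{\B}}$ come out identical. Once that is established, counting gives $n - s = \#\overline{\B}$ and $s \le s''$, $n \le n''$, completing the proof.
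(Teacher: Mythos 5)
Your proof is correct and follows essentially the same two-stage strategy as the paper: first discard linearly dependent rows, then discard the non-pivot excessive columns, and observe that the resulting reduced row echelon form retains the required shape with the same $\mathcal R$- and $\overline{\mathcal B}$-blocks. The paper handles the ``bookkeeping'' concern you raise by simply citing the standard fact that non-pivot (non-basic) columns do not affect the reduced row echelon form of the remaining columns, which is exactly the observation you sketch.
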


\begin{proof}
See SM Sec.~\ref{sec:nco}.
\end{proof}

By Prop.~\ref{prop:nco}, given an elimination template, say $M''$, we can always select a maximal subset of linearly independent rows and remove from $M''$ all the remaining (dependent) rows. The result is an elimination template $M'$. Similarly, we can always select a maximal subset of linearly independent columns corresponding to the set of excessive monomials and remove from $M'$ all the remaining columns corresponding to the excessive monomials. This is accomplished by twice applying the G--J elimination, first on matrix $M''^\top$ to remove dependent rows and then on the resulting matrix $M'$ to remove dependent columns.

\section{Experiments}
\label{sec:exper}

In this section we test our template generator on two sets of minimal problems. The first one consists of the $21$ problems covered in papers~\cite{larsson2017efficient}, \cite{larsson2018beyond} and~\cite{bhayani2020sparse}. They provide the state-of-the-art template generators denoted by Syzygy, BeyondGB and SparseR respectively. The results for the first set of problems are presented in Tab.~\ref{tab:templates}.

The second set consists of the $12$ additional problems which were not presented in~\cite{larsson2018beyond,bhayani2020sparse}. The results for the second set of problems are reported in Tab.~\ref{tab:templates2}. Below we give several remarks regarding Tab.~\ref{tab:templates} and Tab.~\ref{tab:templates2}.

\begin{table*}
\centering
\footnotesize
\begin{tabular}{rlcrlcrlc}
\hline\\[-8pt]
\# &
Problem &
$d$ & \multicolumn{2}{c}{\begin{tabular}{cc}\multicolumn{2}{c}{Our} \\\hline std~ & nstd\end{tabular}} & Syzygy~\cite{larsson2017efficient} & \multicolumn{2}{c}{\begin{tabular}{cc}\multicolumn{2}{c}{BeyondGB~\cite{larsson2018beyond}} \\\hline ~~~~std & nstd\end{tabular}} & SparseR~\cite{bhayani2020sparse} \\[4pt]
\hline\\[-6pt]
1 &
Rel. pose $F$+$\lambda$ 8pt \cite{kuang2014minimal} & 
$8$ & $11 \tm 19$ & $\bf 7 \tm 15$ & $11 \tm 19$ & $11 \tm 19$ & $\bf 7 \tm 15$ & $7 \tm 16$ \\
2 &
Rel. pose $E$+$f$ 6pt \cite{bujnak20093d} & 
$9$ & $\bf 11 \tm 20$ & $\bf 11 \tm 20$ & $21 \tm 30$ & $\bf 11 \tm 20$ & $\bf 11\tm 20$ & $\bf 11 \tm 20$ \\
3 &
Rel. pose $f$+$E$+$f$ 6pt \cite{stewenius2008minimal}, \cite{kukelova2008automatic} & 
$15$ & $12 \tm 27$ & $\bf {\color{blue}11 \tm 26}$ & $31 \tm 46$ & $31 \tm 46$ & $21 \tm 40$ & $12 \tm 30$ \\
4 &
Rel. pose $E$+$\lambda$ 6pt \cite{kuang2014minimal} & 
$26$ & $34 \tm 60$ & $\bf 14 \tm 40$ & $34 \tm 60$ & $34 \tm 60$ & $\bf 14 \tm 40$ & $\bf 14 \tm 40$ \\
5 &
Stitching $f\lambda$+$R$+$f\lambda$ 3pt \cite{naroditsky2011optimizing} & 
$18$ & $48 \tm 66$ & $\bf 18 \tm 36$ & $48 \tm 66$ & $48 \tm 66$ & $\bf 18 \tm 36$ & $\bf 18 \tm 36$ \\
6 &
Abs. pose P4P+fr \cite{bujnak2010new} & 
$16$ & $\bf 52 \tm 68$ & $\bf 52 \tm 68$ & $140 \tm 156$ & $54 \tm 70$ & $54 \tm 70$ & $\bf 52 \tm 68$ \\
7 &
Abs. pose P4P+fr (el. $f$) \cite{larsson2017making} & 
$12$ & $\bf 28 \tm 40$ & $\bf 28 \tm 40$ & $\bf 28 \tm 40$ & $\bf 28 \tm 40$ & $\bf 28 \tm 40$ & $\bf 28 \tm 40$ \\
8 &
Rel. pose $\lambda$+$E$+$\lambda$ 6pt \cite{kukelova2008automatic} & 
$52$ & $73 \tm 125$ & $\bf 39 \tm 95$ & $149 \tm 201$ & $-~~~~$ & $53 \tm 105$ & $\bf 39 \tm 95$ \\
9 &
Rel. pose $\lambda_1$+$F$+$\lambda_2$ 9pt \cite{kukelova2008automatic} & 
$24$ & $\bf {\color{blue}76 \tm 100}$ & $\bf {\color{blue}76 \tm 100}$ & $165 \tm 189$ & $87 \tm 111$ & $87 \tm 111$ & $90\tm 117$ \\
10 &
Rel. pose $E$+$f\lambda$ 7pt \cite{kuang2014minimal} & 
$19$ & $\bf {\color{blue}55 \tm 74}$ & $56 \tm 75$ & $185 \tm 204$ & $69 \tm 88$ & $69 \tm 88$ & $61 \tm 80$ \\
11 &
Rel. pose $E$+$f\lambda$ 7pt (el. $\lambda$) \cite{bhayani2020sparse} & 
$19$ & $37 \tm 56$ & $\bf 22 \tm 41$ & $52 \tm 71$ & $37 \tm 56$ & $24 \tm 43$ & $\bf 22 \tm 41$ \\
12 &
Rel. pose $E$+$f\lambda$ 7pt (el. $f\lambda$) \cite{kukelova2017clever} & 
$19$ & $\bf 51 \tm 70$ & $\bf 51 \tm 70$ & $\bf 51 \tm 70$ & $\bf 51 \tm 70$ & $\bf 51 \tm 70$ & $\bf 51 \tm 70$ \\
13 &
Rolling shutter pose \cite{saurer2015minimal} & 
$8$ & $\bf 47 \tm 55$ & $\bf 47 \tm 55$ & $\bf 47 \tm 55$ & $\bf 47 \tm 55$ & $\bf 47 \tm 55$ & $\bf 47 \tm 55$ \\
14 &
Triangulation (sat. im.) \cite{zheng2015minimal} & 
$27$ & $\bf 87 \tm 114$ & $\bf 87 \tm 114$ & $88\tm 115$ & $88 \tm 115$ & $88 \tm 115$ & $\bf 87 \tm 114$ \\
15 &
Abs. pose refractive P5P \cite{haner2015absolute} & 
$16$ & $\bf {\color{blue}57 \tm 73}$ & $\bf {\color{blue}57 \tm 73}$ & $240 \tm 256$ & $112 \tm 128$ & $199 \tm 215$ & $68 \tm 93$ \\
16 &
Abs. pose quivers \cite{kuang2013pose} & 
$20$ & $\bf {\color{blue}65 \tm 85}$ & $66 \tm 86$ & $169 \tm 189$ & $-~~~~$ & $68 \tm 88$ & $68 \tm 92$ \\
17 &
Unsynch. rel. pose \cite{albl2017two} & 
$16$ & $159 \tm 175$ & $\bf {\color{blue}139 \tm 155}$ & $159 \tm 175$ & $-~~~~$ & $299 \tm 315$ & $150 \tm 168$ \\
18 &
Optimal PnP (Hesch) \cite{hesch2011direct} & 
$27$ & $\bf 87 \tm 114$ & $\bf 87 \tm 114$ & $88 \tm 115$ & $88 \tm 115$ & $88 \tm 115$ & $\bf 87 \tm 114$ \\
19 &
Optimal PnP (Cayley) \cite{nakano2015globally} & 
$40$ & $\bf 118 \tm 158$ & $\bf 118 \tm 158$ & $\bf 118 \tm 158$ & $\bf 118 \tm 158$ & $\bf 118 \tm 158$ & $\bf 118 \tm 158$ \\
20 &
Optimal pose 2pt v2 \cite{svarm2016city} & 
$24$ & $\bf {\color{blue}139 \tm 163^*}$ & $141 \tm 165^*$ & $192 \tm 216$ & $-~~~~$ & $192 \tm 216$ & $176 \tm 200$ \\
21 &
Rel. pose $E$+angle 4pt \cite{li20134} & 
$20$ & $\bf {\color{blue}99 \tm 119^*}$ & $\bf {\color{blue}99 \tm 119^*}$ & $246 \tm 276$ & $-~~~~$ & $183 \tm 249$ & $-$ \\
\hline
\end{tabular}
\caption{A comparison of the elimination templates of our test minimal problems. We follow the notations from~\cite{larsson2018beyond,bhayani2020sparse} for the problems' names. The columns ``std'' and ``nstd'' stand for the templates generated respectively in standard way using Gr\"obner bases and in non-standard way using heuristics. The minimal templates are shown in bold, the templates which are smaller than the state-of-the-art are shown in blue bold, symbol ``$-$'' means a missing template, $d$ is the dimension of the quotient space, $*$: the template is reduced by the method of Subsect.~\ref{subsec:schur}.}
\label{tab:templates}
\end{table*}

\begin{table*}
\centering
\footnotesize
\begin{tabular}{rlcrlcc}
\hline\\[-8pt]
\# & Problem &
$d$ & \multicolumn{2}{c}{\begin{tabular}{cc}\multicolumn{2}{c}{Our} \\\hline ~~std & nstd\end{tabular}} & Original & Syzygy~\cite{larsson2017efficient} \\[4pt]
\hline\\[-6pt]
22 &
Rel. pose $\lambda$+$F$+$\lambda$ 8pt \cite{kukelova2008automatic} & 
$16$ & $\bf {\color{blue}31 \tm 47}$ & $\bf {\color{blue}31 \tm 47}$ & $32 \tm 48$ & $32 \tm 48$ \\
23 &
P3.5P+focal \cite{wu2015p3} & 
$10$ & $\bf {\color{blue}18 \tm 28}$ & $19 \tm 29$ & $20 \tm 43$ & $20\tm 30$ \\
24 &
Gen. P4P+scale \cite{ventura2014minimal} & 
$8$ & $\bf 47 \tm 55$ & $\bf 47 \tm 55$ & $48 \tm 56$ & $\bf 47 \tm 55$ \\
25 &
Rel. pose $E$+angle 4pt v2 \cite{martyushev2020efficient} & 
$20$ & $\bf 16 \tm 36^*$ & $\bf 16 \tm 36^*$ & $\bf 16 \tm 36^*$ & $36 \tm 56$ \\
26 &
Gen. rel. pose $E$+angle 5pt \cite{martyushev2020efficient} & 
$44$ & $\bf 37 \tm 81^*$ & $\bf 37 \tm 81^*$ & $\bf 37 \tm 81^*$ & $317 \tm 361$ \\
27 &
Rel. pose $E$+$fuv$+angle 7pt \cite{martyushev2018self} & 
$6$ & $46 \tm 52$ & $40 \tm 46$ & \tiny$\left\{\hspace{-5pt}\begin{array}{c} \bf 13 \tm 32\\ \bf 19 \tm 32 \\ \bf 11 \tm 20 \\ \bf 14 \tm 20\end{array}\right.$ & $66 \tm 72$ \\
28 &
Rolling shutter R6P \cite{albl2015r6p} & 
$20$ & $\bf {\color{blue}120 \tm 140}$ & $\bf {\color{blue}120 \tm 140}$ & $196 \tm 216$ & $204 \tm 224$ \\
29 &
Opt. pose w dir 4pt \cite{svarm2016city} & 
$28$ & $\bf {\color{blue}134 \tm 162^*}$ & $144 \tm 172^*$ & $280 \tm 252$ & $203 \tm 231$ \\
30 &
Opt. pose w dir 3pt \cite{svarm2016city} & 
$48$ & $397 \tm 445^*$ & $\bf {\color{blue}385 \tm 433^*}$ & $1260 \tm 1278$ & $544 \tm 592$ \\
31 &
$L_2$ 3-view triang. (relaxed) \cite{kukelova2013fast} & 
$31$ & $\bf {\color{blue}217 \tm 248}$ & $281 \tm 312$ & $274 \tm 305$ & $231 \tm 262$\\
32 &
Refractive P6P+focal \cite{haner2015absolute} & 
$36$ & $\bf {\color{blue}126 \tm 162}$ & $178 \tm 214$ & $648 \tm 917$ & $636 \tm 654$\\
33 &
Rel. pose $f\lambda$+$E$+$f\lambda$ 7pt \cite{jiang2014minimal} & 
$68$ & $\bf {\color{blue}209 \tm 277}$ & $255 \tm 323$ & $886 \tm 1011$ & $581 \tm 659$ \\
34 &
Gen. rel. pose + scale 7pt~\cite{kneip2016generalized} & 
$140$ & $\bf 144 \tm 284$ & $\bf 144 \tm 284$ & $-$ & $\bf 144 \tm 284$ \\
\hline
\end{tabular}
\caption{A comparison of the elimination templates of our test minimal problems. The columns ``std'' and ``nstd'' stand for the templates generated from the standard and non-standard quotient ring bases respectively. We follow the notations from~\cite{larsson2017efficient} for the problems' names. The minimal templates are shown in bold, the templates which are smaller than the state-of-the-art are shown in blue bold, symbol ``$-$'' means a missing template, $d$ is the dimension of the quotient space, $*$: the template is reduced by the method of Subsect.~\ref{subsec:schur}.}
\label{tab:templates2}
\end{table*}

\noindent{\bf 1.} The column ``std'' consists of the smallest templates generated in a standard way using Gr\"obner bases either from the entire Gr\"obner fan of the ideal\footnote{We used the software package Gfan~\cite{gfan} to compute Gr\"obner fans.} or from 1,000 randomly selected bases in case the Gr\"obner fan computation cannot be done in a reasonable time. The column ``nstd'' consists of the smallest templates generated from the 500 quotient space bases found by using the random sampling strategy from~\cite{larsson2018beyond}.

\noindent{\bf 2.} The templates marked with $*$ were reduced by the method of Subsect.~\ref{subsec:schur}. The related minimal problem formulations contain a simple sparse polynomial with (almost) all constant coefficients. For example, the formulations of problems \#25 and \#26 contain the quaternion normalization constraint $x^2 + y^2 + z^2 + \sigma^2 = 1$, where $x$, $y$, $z$ are unknowns and the value of $\sigma$ is known. All the multiples of this equation can be safely eliminated from the template by constructing the Schur complement of the respective block.

\noindent{\bf 3.} The polynomial equations for problem \#3 are constructed from the null-space of a $6 \times 9$ matrix. We used the sparse basis of the null-space constructed by the G--J elimination as it leads to a smaller elimination template compared to the dense basis constructed by the SVD.

\noindent{\bf 4.} The $39\times 95$ elimination template for problem \#8 was found \wrt the reciprocal of the action variable $\lambda$ representing the radial distortion, \ie, vector $V$ from~\eqref{eq:Vdef} was defined as $V = \lambda^{-1} \vect{\B} - T_{\lambda^{-1}}\vect{\B}$, where the non-standard basis $\B$ consists of monomials that are all divisible by $\lambda$. In terms of paper~\cite{byrod2009fast}, the set $\B$ constitutes the redundant solving basis as it consists of $56$ monomials whereas the number of solutions to problem \#8 is $52$. The four spurious solutions can be filtered out by removing solutions with the worst values of normalized residuals.

\noindent{\bf 5.} The initial formulation of problem \#15 consists of $5$ degree-$3$ polynomials in $5$ variables: $3$ rotation parameters and $2$ camera center coordinates. As suggested in~\cite{bhayani2020sparse}, we first simplified these polynomials using a G--J elimination on the related Macaulay matrix. After that, $2$ of $5$ polynomials depend only on the rotation variables. The remaining $3$ polynomials depend linearly on the camera center variables. We used $2$ of these polynomials to solve for the camera center and then substitute the solution into the third polynomial resulting in one additional polynomial of degree $4$ in $3$ rotation variables only. Hence our formulation of the problem consists of $3$ polynomials in $3$ variables: $1$ polynomial of degree $4$ and $2$ polynomials of degree $2$. It is important to note that (i) the coefficients of the degree-$4$ polynomial are linearly (and quite easily) expressed in terms of the coefficients of the $3$ initial polynomials and (ii) this elimination process does not introduce any spurious roots. We also note that the problem has the following $2$-fold symmetry: if $x$, $y$, $z$ are the rotation parameters for the Cayley-transform representation, then replacing $x \to y/z$, $y \to -x/z$ and $z \to -1/z$ leaves the polynomial system unchanged. It follows that the problem has no more than $8$ ``essentially distinct'' solutions and hence the template for this problem could be further reduced.

\noindent{\bf 6.} Problem \#27 was originally solved by applying a cascade of four G--J eliminations to the manually saturated polynomial ideal. We marked the original solver in bold as it is faster than the new single elimination solver (0.4~ms against 0.6~ms).

\noindent{\bf 7.} The initial formulation of problem \#32 consists of $6$ degree-$4$ polynomials in $6$ variables: $3$ rotation parameters, $2$ camera center coordinates and the focal length. Similarly as we did for problem \#15, we first simplified the equations using a G--J elimination on the related Macaulay matrix and then we eliminated the camera center coordinates. This results in $4$ equations in $4$ unknowns: $1$ polynomial of degree $5$, $2$ of degree $3$ and $1$ of degree $2$. As in the case of problem \#15, eliminating variables does not introduce any spurious solutions. We also note that the problem has a $4$-fold symmetry meaning that the number of its ``essentially distinct'' roots is not more than $9$. It follows that the template for this problem could be further reduced.

\noindent{\bf 8.} The implementation of the new AG, as well as the Matlab solvers for all the minimal problems from Tab.~\ref{tab:templates} and Tab.~\ref{tab:templates2}, are available at \href{http://github.com/martyushev/EliminationTemplates}{http://github.com/martyushev/EliminationTemplates}. In SM Sec.~\ref{sec:results}, we test the speed and numerical stability of our solvers.

\subsection{Relative pose with unknown focal length and radial distortion}

The problem of relative pose estimation of a camera with unknown but fixed focal length and radial distortion can be minimally solved from seven point correspondences in two views. It was first considered in paper~\cite{jiang2014minimal}, where it was formulated as a system of $12$ polynomial equations: $1$ equation of degree $2$, $1$ of degree $3$, $2$ of degree $5$, $3$ of degree $6$ and $5$ of degree $7$. The $5$ unknowns are: the radial distortion parameter $\lambda$ for the division model from~\cite{fitzgibbon2001simultaneous}, the reciprocal square of the focal length $f^{-2}$ and the thee entries $F_{32}$, $F_{13}$, $F_{23}$ of the fundamental matrix $F$. The related polynomial ideal has degree $68$ meaning that the problem generally has $68$ solutions.

We started from the same formulation of the problem as in the original paper~\cite{jiang2014minimal}. We did not manage to construct the Gr\"obner fan for the related polynomial ideal in a reasonable amount of time (about 24 hours). Instead, we randomly sampled 1,000 weighted monomial orderings so that the respective reduced Gr\"obner bases are all distinct. We avoided weight vectors where a one entry is much smaller than the others, since the monomial orderings for such weights usually lead to notably larger templates. We also constructed 500 heuristic bases of the quotient ring by using the random sampling strategy from~\cite{larsson2018beyond}. Then, we used our automatic generator to construct elimination templates for all the bases (both standard and non-standard) and for all the action variables. The smallest template we found this way has size $209 \times 277$. It corresponds to the standard basis for the weighed monomial ordering with $f^{-2} > F_{32} > F_{13} > F_{23} > \lambda$ and the weight vector $w = \begin{bmatrix}135 & 81 & 98 & 107 & 68\end{bmatrix}^\top$. The action variable is $\lambda$.

\begin{figure}
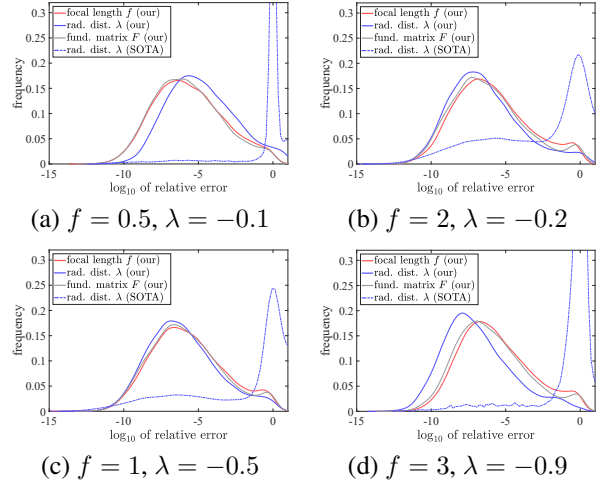

\centering
\begin{tabular}{cc}
\ig{0.21}{fig3a-crop} &
\ig{0.21}{fig3b-crop} \\
(a) $f = 0.5$, $\lambda = -0.1$ & (b) $f = 2$, $\lambda = -0.2$ \\[5pt]
\ig{0.21}{fig3c-crop} &
\ig{0.21}{fig3d-crop} \\
(c) $f = 1$, $\lambda = -0.5$ & (d) $f = 3$, $\lambda = -0.9$
\end{tabular}
\caption{The distribution of relative errors for problem \#33 from Tab.~\ref{tab:templates2} on $10^4$ trials for different values of focal length $f$ and radial distortion $\lambda$. For comparison, we also added the relative error distribution for $\lambda$ obtained by the state-of-the-art (SOTA) solver from~\cite{oskarsson2021fast}.}
\label{fig3}
\end{figure}

The solver from paper~\cite{jiang2014minimal}, based on the elimination template of size $886\times 1011$, is not publically available. However, the results reported in the paper assume that the solver from~\cite{jiang2014minimal} is much slower than our one (400~ms against 8.5~ms), while the both solvers demonstrate comparable numerical accuracy. The solver based on the $581\times 659$ template generated by the AG from~\cite{larsson2017efficient} is almost twice slower (about 16~ms) than our solver. Moreover, the solver~\cite{larsson2017efficient} it is unstable and requires additional stability improving techniques, \eg, column pivoting~\cite{byrod2009fast}. Hence we compared our solver with the only publicly available state-of-the-art solver from the recent paper~\cite{oskarsson2021fast}.

We modeled a scene consisting of seven points viewed by two cameras with unknown but shared focal length $f$ and radial distortion parameter $\lambda$. The distance between the first camera center and the scene is $1$, the scene dimensions (w $\tm$ h $\tm$ d) are 1 $\tm$ 1 $\tm$ 0.5 and the baseline length is $0.3$.

We tested the numerical accuracy of our solver by constructing the distributions of relative errors for the focal length $f$, radial distortion parameter $\lambda$ and fundamental matrix $F$ on noise-free image data. We only kept the roots satisfying the following ``feasibility'' conditions: (i) $f^{-2}$ is real; (ii) $f^{-2} > 0$; (iii) $-1 \leq \lambda \leq 1$. The results for different values of $f$ and $\lambda$ are shown in Fig.~\ref{fig3}.

Our solver failed (\ie, found no feasible solutions) in approximately $2\%$ of trials. The average runtime for the solver from~\cite{oskarsson2021fast} was $2.9$~ms which is almost $3$ times less than the execution time for our solver ($8.5$~ms). However, we note that the main parts of the solver from~\cite{oskarsson2021fast} are written in C++, whereas our algorithm is fully implemented in Matlab. This provides a room for further speed up of our solver.

\section{Conclusion}
\label{sec:concl}

We developed a new method for constructing small and stable elimination templates for efficient polynomial system solving of minimal problems. We presented the state-of-the-art templates for many minimal problems with substantial improvement for harder problems.

\bibliographystyle{amsplain}
\bibliography{biblio}

\renewcommand{\textfraction}{0.05}
\renewcommand{\topfraction}{0.8}
\renewcommand{\bottomfraction}{0.8}
\renewcommand{\textfraction}{0.1}
\renewcommand{\floatpagefraction}{0.8}

\twocolumn[
\begin{center}
{\Large {\bf \TitleText}\\[1ex]
Supplementary Material\\[3ex]}
\begin{tabular}{ccc}
\begin{minipage}{0.3\linewidth}
\centering \large \NameEM
\end{minipage}&
\begin{minipage}{0.3\linewidth}
\centering \large \NameJV
\end{minipage}&
\begin{minipage}{0.3\linewidth}
\centering \large \NameTP
\end{minipage}
\end{tabular}
\vspace{3ex}
\end{center}
]

\setcounter{theorem}{0}
\setcounter{prop}{0}

\noindent Here we give additional details for the main paper

\noindent{\em
E.~Martyushev, J.~Vrablikova, T.~Pajdla. \TitleText. CVPR~2022.}\href{http://github.com/martyushev/EliminationTemplates}\\{\href{http://github.com/martyushev/EliminationTemplates}{http://github.com/martyushev/EliminationTemplates}}

We present some basic notions from algebraic geometry, proofs, examples of constructing elimination templates, numerical details, and additional experiments demonstrating the numerical stability.

\section{Monomial orderings}
\label{sec:basic}

A \emph{monomial ordering} $>$ on $[X]$ is a total ordering satisfying (i) $p > 0$ for all $p \in [X]$ and (ii) if $p > q$, then $p\,s > q\,s$ for all $p, q, s \in [X]$. We are particularly interested in the following two orderings:
\begin{enumerate}
\item \emph{graded reverse lex ordering} (grevlex) compares monomials first by their total degree, and breaks ties by smallest degree in $x_k$, $x_{k - 1}$, etc.

\item \emph{weighted-degree ordering} \wrt a weight vector $w \in \mathbb R^k_+$ compares monomials first by their weighted degree (the dot product of $w$ with the exponent vector $\begin{bmatrix}\alpha_1 & \ldots & \alpha_k\end{bmatrix}^\top$), and breaks ties by reverse lexicographic order as in grevlex.
\end{enumerate}

\section{Proof of Theorem~\ref{thm:ElimTempl}}
\label{sec:proof}

The following theorem is not a new result of this work. It is ``folclore'' in algebraic geometry and has been used, \eg, in~\cite{byrod2007improving,larsson2017efficient,larsson2018beyond}, but we could not find it formulated clearly and concisely in the literature. Thus, we present it here for the sake of completeness.

\begin{theorem}
The elimination template is well defined, \ie, for any $s$-tuple of polynomials $F = (f_1, \ldots, f_s)$ such that ideal $\langle F \rangle$ is zero-dimensional, there exists a set of shifts $A \cdot F$ satisfying both conditions from Definition~\ref{def:ElimTempl}.
\end{theorem}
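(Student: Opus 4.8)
The plan is to exhibit an explicit set of shifts $A \cdot F$ whose Macaulay matrix satisfies both conditions of Definition~\ref{def:ElimTempl}. I would start from a reduced Gr\"obner basis $G = \{g_1, \ldots, g_t\}$ of $J = \langle F \rangle$ with respect to some fixed monomial ordering, which exists and is finite since $J$ is zero-dimensional; let $\mathcal B$ be the associated standard monomial basis and $a$ the (monomial) action polynomial, so that $\mathcal R = \{a\,b : b \in \mathcal B\} \setminus \mathcal B$ is finite. The first task is to ensure condition~1, i.e., $\mathcal R \subset [X]_{A \cdot F}$: for each $r \in \mathcal R$, since $r$ is not a standard monomial, it is divisible by $\lm(g_{j(r)})$ for some $j(r)$, so the shift $(r / \lm(g_{j(r)})) \cdot g_{j(r)}$ contains $r$ as a monomial. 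Collecting all such shifts (plus, if needed, further shifts to cover divisors appearing in reductions) gives an $A$ with $\mathcal R \subseteq [X]_{A \cdot F}$. Because $F$ generates $J$ and $G \subset J$, each $g_j$ is a polynomial combination $g_j = \sum_k h_{jk} f_k$, so every shift of a $g_j$ is a $\mathbb K[X]$-combination of shifts of the $f_k$; expanding $h_{jk}$ into monomials and taking the union of all monomial shifts of the $f_k$ thus appearing, I obtain the actual tuple $A$ acting on $F$ (rather than on $G$), which still covers $\mathcal R$.

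For condition~2, the key point is that after Gauss--Jordan elimination the rows of $M(A \cdot F)$ span the same $\mathbb K$-space as the corresponding shifts span inside $J$, and this space, when restricted to the monomials $\mathcal X = [X]_{A\cdot F}$, must "resolve" every reducible monomial. Concretely: for each $r \in \mathcal R$, the normal form $\overline{r}^{G}$ lies in the span of $\mathcal B$, and $r - \overline{r}^{G} \in J$ is, by construction of $A$, a $\mathbb K$-linear combination of the shifts in $A \cdot F$ (this is the standard fact that Gr\"obner-basis reduction of $r$ expresses $r - \overline r^G$ through shifts of the $g_j$, hence through shifts of the $f_k$, all of which we have included in $A \cdot F$). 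Therefore the row space of $M(A \cdot F)$ contains, for every $r \in \mathcal R$, a vector supported on $\{r\} \cup \mathcal B \cup \mathcal E$ with coefficient $1$ on $r$ and coefficients on $\mathcal R \setminus \{r\}$ equal to zero. Arranging the columns in the order $\mathcal E \mid \mathcal R \mid \overline{\mathcal B}$ and performing reduced row echelon reduction, these $\#\mathcal R$ vectors produce exactly the identity block $I$ in the $\mathcal R$-columns together with a block $\widetilde M_{\overline{\mathcal B}}$ in the $\overline{\mathcal B}$-columns and zeros below; the remaining rows involve only $\mathcal E$ and $\overline{\mathcal B}$ columns, giving the top-left $*$ and top-right $*$ blocks and the zero row at the bottom. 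Linear independence of $\{b : b \in \mathcal B\}$ modulo $J$ guarantees that no further pivot can appear in the $\overline{\mathcal B}$ columns, so the shape~\eqref{eq:tildeM} is exactly what is obtained.

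The main obstacle I anticipate is the bookkeeping around passing from shifts of $G$ to shifts of $F$: one must verify that the monomial multipliers $m$ such that $m \cdot f_k$ appears when expanding all the chosen shifts of the $g_j$ form a genuine finite tuple $A = (A_1, \ldots, A_s)$, and that including these does not accidentally destroy the echelon shape — but it cannot, since adding rows that already lie in the row space never changes the reduced row echelon form, and adding rows that enlarge $\mathcal E$ only enlarges the harmless $*$ blocks. A secondary subtlety is that the definition works with $\overline{\mathcal B} = \mathcal B \cap [X]_{A\cdot F}$ rather than all of $\mathcal B$; this is automatically handled because condition~2 only constrains the columns that are actually present, and the argument above produces the required pivots regardless of whether every basic monomial shows up as a column. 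I would present the formal details of the reduction-to-shifts step as the heart of the proof and keep the echelon-form verification brief, since it is immediate once the row space is understood.
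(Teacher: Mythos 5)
Your proposal is correct and takes essentially the same route as the paper's proof in SM Section~\ref{sec:proof}: both start from a reduced Gr\"obner basis $G$ of $\langle F\rangle$, expand each reducible monomial $r$ via the division algorithm, rewrite the $g_i$ in terms of $F$ so that the monomial multipliers populate the shift set $A$, and then use the fact that $r-\overline r^{\,G}$ lies in the $\mathbb K$-span of $A\cdot F$ with support in $\{r\}\cup\overline{\mathcal B}$ to force a pivot in every $\mathcal R$-column of the reduced row echelon form (and appeal to the linear independence of $\mathcal B$ modulo $J$ to rule out pivots in $\overline{\mathcal B}$-columns). One small imprecision: you write that the vector $r-\overline r^{\,G}$ is ``supported on $\{r\}\cup\mathcal B\cup\mathcal E$,'' but the pivot argument actually requires the stronger fact that it has \emph{no} support in $\mathcal E$ at all --- which your own preceding sentence already establishes (since $\overline r^{\,G}$ lies in the span of $\mathcal B$); if $\mathcal E$-support were allowed, a pivot could fall in an $\mathcal E$-column rather than at $r$.
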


\begin{proof}
Let us first show that there is a set of polynomials $A$ such that all reducible monomials $\mathcal R$ appear in the support of $A \cdot F$. Let $G = \{g_1, \ldots, g_l\}$ be a reduced Gr\"obner basis of the zero-dimensional ideal $J = \langle F \rangle$ and let $\LM{G}$ be the set of its leading monomials, with $\langle \LM{G}\rangle\ = \langle \LM{J}\rangle$~\cite[p.~78 Definition 5]{Cox-IVA-2015}. Let $\B$ be the set of standard monomials representing a linear basis $B$ of the quotient ring $\mathbb K[X]/J$ for $G$. Then, for every reducible monomial $r \in \mathcal R = \{a\,b \cln b\in \B\} \setminus \B$ for $B$ and any $a \in \mathbb K[X]$, there holds true $r \in \langle \LM{G} \rangle$, because $r$ is a multiple of some $b \in \B$ but $r$ is not an element of $\B$. The set $\B$ of the standard monomials is finite~\cite[p.~251 Theorem 6]{Cox-IVA-2015}. Thus, $\mathcal R$ is finite too, and we can write $\mathcal R = \{r_1, \ldots, r_n\}$. For every $k \in \N$, $k \leq n$, we can write $r_k = m_{k1}g_1 + \ldots + m_{kl}g_l + p_k$, where $m_{ki} \in \mathbb K[X]$, $\deg(r_k) \geq \deg(m_{ki}g_i)$ for every $i \in \{1, \ldots, l\}$, and $p_k \in \mathbb K[X]$ is the polynomial satisfying $\overline{r_k}^{G} = p_k = \overline{p_k}^{G}$~\cite[p.~64 Theorem 3]{Cox-IVA-2015}. Moreover, for every $i \in \{1, \ldots, l\}$, there exists $q_{ij} \in \mathbb K[X]$ such that $g_i \in G$ can be written as $g_i = \sum_j q_{ij}f_j$. We can write $r_k = \sum_i \sum_j m_{ki} q_{ij}f_j + p_k$. Let $\{m_{ki}q_{ij}\} \in A_j$ for all $i \in \{1, \ldots, l\}$, $j \in \{1, \ldots, s\}$, $k \in \{1, \ldots, n\}$. Then, the Macaulay matrix $M(A \cdot F)$ has a non-zero element in every column corresponding to a monomial from $\mathcal R$.

Let us next show that the eliminated matrix $\widetilde M(A \cdot F)$ contains a pivot in every column corresponding to a monomial from $\mathcal R$. Denote by $\widetilde P$ the set of polynomials $\widetilde P = \widetilde M(A \cdot F) \cdot [X]_{A\cdot F}$. For a set $S$ denote $\hull(S)$ the linear space over $\mathbb K$ spanned by the elements of $S$, \ie $\hull(S) = \{\sum_j c_js_j \cln s_j \in S, c_j \in \mathbb K\}$.

Suppose $\{m_{ki}q_{ij}\} \in A_j$ for all $i \in \{1, \ldots, l\}$, $j \in \{1, \ldots, s\}$, $k \in \{1, \ldots, n\}$. For every $k$ we have $r_k = \sum_i\sum_j m_{ki} q_{ij}f_j + p_k$, hence $r_k - p_k \in \hull(A \cdot F) = \hull(\widetilde P)$. The polynomial $p_k$ is a linear combination of elements from $\overline{\B}$, thus the polynomial $r_k - p_k$ contains only one reducible monomial $r_k$ and no excessive monomials. Since $\widetilde M(A \cdot F)$ is in the reduced row echelon form, there is a row in $\widetilde M(A \cdot F)$ corresponding to the polynomial $r_k - p_k \in \hull(\widetilde P)$ with zero coefficients at all excessive monomials and all reducible monomials except for $r_k$. Hence, there is a pivot in every column of $\widetilde M(A \cdot F)$ corresponding to a reducible monomial. It follows that $\widetilde M(A \cdot F)$ must have the form~\eqref{eq:tildeM} meaning that $M(A \cdot F)$ is the elimination template.
\end{proof}

\section{Examples}
\label{sec:examples}

In this section, we provide several examples of constructing elimination templates and using them to compute solutions of polynomial systems.

\begin{example}
\label{exa:2conics}
In the first example we demonstrate th construction of the elimination template for a set of two polynomials in $\mathbb Q[x,y]$. We derive the action matrix and show how to extract the solution of the system from the action matrix. 

Let $J = \langle F \rangle$, where $F = \{f_1, f_2\} = \{x^2 + y^2 - 1, x^2 + xy + y^2 - 1\} \subset \mathbb Q[x, y]$. The Gr\"obner basis of $J$ \wrt grevlex with $x > y$ is $G = \{xy, x^2 + y^2 - 1, y^3 - y\}$. The standard basis of $\mathbb Q[x, y]/J$ is $\B = \{y^2, y, x, 1\}$. If $x$ is the action variable, then the action matrix is
\[
T_x = \begin{bmatrix}
0 & 0 & 0 & 0 \\
0 & 0 & 0 & 0 \\
-1 & 0 & 0 & 1 \\
0 & 0 & 1 & 0
\end{bmatrix}.
\]
Let us construct vector $V$ define in Eq.~\eqref{eq:Vdef}:
\[
V = x\vect{\B} - T_x \vect{\B} = 
\begin{bmatrix}
xy^2 \\ x^2 + y^2 - 1 \\ xy \\ 0
\end{bmatrix}.
\]
Since $V \subset J$, see Sec.~\ref{sec:constr}, there exists matrix $H_0$ such that $V = H_0 \vect{F}$. By tracing the computation of the Gr\"obner basis $G$ we found
\[
H_0 = \begin{bmatrix}
-y & y \\
1 & 0 \\
-1 & 1 \\
0 & 0
\end{bmatrix}.
\]
It follows that it is enough to take the set of shifts $A\cdot F = \{yf_2, yf_1, f_2, f_1\}$. We divide the set of monomials $[X]_{A\cdot F}$ into the subsets $\overline{\B} = \B \cap [X]_{A\cdot F} = \{y^2, y, 1\}$, $\mathcal R = \{xy^2, xy, x^2\}$ and $\mathcal E = \{x^2y, y^3\}$. This yields the elimination template
\begin{multline*}
M(A\cdot F) = \begin{bmatrix} M_{\mathcal E} & M_{\mathcal R} & M_{\overline{\B}}\end{bmatrix} \\=
\kbordermatrix{& x^2y & y^3 & xy^2 & xy & x^2 & y^2 & y & 1 \\
yf_2 & 1 & 1 & 1 & 0 & 0 & 0 & -1 & 0 \\ 
yf_1 & 1 & 1 & 0 & 0 & 0 & 0 & -1 & 0 \\ 
f_2 & 0 & 0 & 0 & 1 & 1 & 1 & 0 & -1 \\
f_1 & 0 & 0 & 0 & 0 & 1 & 1 & 0 & -1}.
\end{multline*}
The reduced row echelon form of $M(A\cdot F)$ has the form
\begin{multline*}
\widetilde M(A\cdot F) =
\left[
\begin{array}{c|c|c}
\widetilde M_{\mathcal E} & 0 & *\\\hline
0 & I & \widetilde M_{\B}
\end{array}\right]
\\=
\left[
\begin{array}{cc|ccc|cccc}
1 & 1 & 0 & 0 & 0 & 0 & -1 & 0 & 0 \\\hline
0 & 0 & 1 & 0 & 0 & 0 & 0 & 0 & 0 \\
0 & 0 & 0 & 1 & 0 & 0 & 0 & 0 & 0 \\
0 & 0 & 0 & 0 & 1 & 1 & 0 & 0 & -1
\end{array}\right].
\end{multline*}
Then the action matrix $T_x$ is read off as $\begin{bmatrix}-\widetilde M_{\B} \\ P\end{bmatrix}$, where $P = \begin{bmatrix}0 & 0 & 1 & 0\end{bmatrix}$ satisfies $x = P \vect{\B}$.

The eigenvalues of $T_x$, \ie $\{0, \pm 1\}$, where the geometric multiplicity of the eigenvalue $\lambda = 0$ equals 2, \ie the eigen-space associated with $\lambda = 0$ is 2-dimensional. Hence, the $x$-components of the roots are $0, 0, -1, 1$. The $y$-components can be derived from the eigenvectors resulting in the following roots: $\{(0, -1), (0, 1), (-1, 0), (1, 0)\}$.
\end{example}

\begin{example}
\label{example2}
In this example we demonstrate the usage of non-standard bases of the quotient space. Having an action matrix related to the standard basis $\widehat B$, we can construct the action matrix related to a non-standard basis $\B$ by a change-of-basis matrix. Another option is to construct a set of shifts and divide its monomials so that the basis monomials are the ones from the non-standard basis $\B$. The action matrix derived from the resulting elimination template is the action matrix related to $\B$.

Let $J = \langle F \rangle$, where $F = \{f_1, f_2\} = \{x^3 + y^2 - 1, x - y - 1\} \subset \mathbb Q[x, y]$. The Gr\"obner basis of $J$ \wrt grevlex with $x > y$ is $G = \{x - y - 1, y^3 + 4y^2 + 3y\}$. The standard basis of $\mathbb Q[x, y]/J$ is $\widehat \B = \{1, y, y^2\}$. If $x$ is the action variable, then the related action matrix is
\[
\widehat T_x = \begin{bmatrix}
1 & 1 & 0 \\
0 & 1 & 1 \\
0 & -3 & -3
\end{bmatrix}.
\]

Now let us consider the non-standard basis $\B = \{x^2, y, 1\}$. The respective change-of-basis matrix $S$, \ie a matrix satisfying $\overline{\vect{\B}}^G = S\, \vect{\widehat \B}$, has the form
\[
S = \begin{bmatrix}
1 & 2 & 1 \\
0 & 1 & 0 \\
1 & 0 & 0
\end{bmatrix}.
\]
Then the matrix of the action operator in the basis $\B$ is
\[
T_x = S\,\widehat T_x S^{-1} = \begin{bmatrix}
-1 & 2 & 2 \\
1 & -1 & -1 \\
0 & 1 & 1
\end{bmatrix}.
\]

The vector $V$ define in Eq.~\eqref{eq:Vdef} has the form
\[
V = x\vect{\B} - T_x \vect{\B} = 
\begin{bmatrix}
x^3 + x^2 - 2y - 2 \\ -x^2 + xy + y + 1 \\ x - y - 1
\end{bmatrix}.
\]
Since $V \subset J$, see Sec.~\ref{sec:constr}, there exists matrix $H_0$ such that $V = H_0 \vect{F}$. By tracing the computation of the Gr\"obner basis $G$ we found
\[
H_0 = \begin{bmatrix}
1 & x + y + 1 \\
0 & -x - 1 \\
0 & 1
\end{bmatrix}.
\]
It follows that it is enough to take the set of shifts $A\cdot F = \{xf_2, yf_2, f_2, f_1\}$. We divide the set of monomials $[X]_{A\cdot F}$ into the subsets $\overline{\B} = \B = \{x^2, y, 1\}$, $\mathcal R = \{x^3, xy, x\}$ and $\mathcal E = \{y^2\}$. This yields the elimination template
\begin{multline*}
M(A\cdot F) = \begin{bmatrix} M_{\mathcal E} & M_{\mathcal R} & M_{\B}\end{bmatrix} \\=
\kbordermatrix{& y^2 & x^3 & xy & x & x^2 & y & 1 \\
xf_2 & 0 & 0 & -1 & -1 & 1 & 0 & 0 \\ 
yf_2 & -1 & 0 & 1 & 0 & 0 & -1 & 0 \\ 
f_2 & 0 & 0 & 0 & 1 & 0 & -1 & -1 \\
f_1 & 1 & 1 & 0 & 0 & 0 & 0 & -1}.
\end{multline*}
The reduced row echelon form of $M(A\cdot F)$ is as follows
\begin{multline*}
\widetilde M(A\cdot F) =
\left[
\begin{array}{c|c|c}
\widetilde M_{\mathcal E} & 0 & *\\\hline
0 & I & \widetilde M_{\B}
\end{array}\right]
\\=
\left[
\begin{array}{c|ccc|ccc}
1 & 0 & 0 & 0 & -1 & 2 & 1 \\\hline
0 & 1 & 0 & 0 & 1 & -2 & -2 \\
0 & 0 & 1 & 0 & -1 & 1 & 1 \\
0 & 0 & 0 & 1 & 0 & -1 & -1
\end{array}\right].
\end{multline*}
Then the action matrix $T_x$ is exactly $-\widetilde M_{\B}$. The eigenvalues of $T_x$, \ie $\{-2, 0, 1\}$, give us the $x$-components of the roots. The $y$-components can be derived from the eigenvectors, which are
\[
\begin{bmatrix}
4 \\ -3 \\ 1
\end{bmatrix},
\quad
\begin{bmatrix}
0 \\ -1 \\ 1
\end{bmatrix},
\quad
\begin{bmatrix}
1 \\ 0 \\ 1
\end{bmatrix}.
\]
Hence, we get the roots $\{(-2, -3), (0, -1), (1, 0)\}$.
\end{example}

\begin{example}
In this example we consider a set of polynomials with the same structure as in Example~\ref{example2} but with different coefficients. We use the same elimination template as in the previous example and just plug in the corresponding coefficient. Then we derive the action matrix. We again use the non-standard basis $\B = \{x^2,y,1\}$. 

Let $F = \{f_1, f_2\} = \{x^3 - \sqrt2 y^2 - 3, x - \sqrt3 y + 4\} \subset \mathbb R[x, y]$.

We can use the same set of shifts $A\cdot F = \{xf_2, yf_2, f_2, f_1\}$ to construct the elimination template
\[
M(A\cdot F) =
\kbordermatrix{& y^2 & x^3 & xy & x & x^2 & y & 1 \\
& 0 & 0 & -\sqrt3 & 4 & 1 & 0 & 0 \\ 
& -\sqrt3 & 0 & 1 & 0 & 0 & 4 & 0 \\ 
& 0 & 0 & 0 & 1 & 0 & -\sqrt3 & -1 \\
& \sqrt2 & 1 & 0 & 0 & 0 & 0 & -3}.
\]
Finding the reduced row echelon form of $M(A\cdot F)$ results in the following action matrix:
\[
T_x = \begin{bmatrix}
\frac{\sqrt2}{3} & \frac{8\sqrt6}{3} & -\frac{16\sqrt2}{3} - 3 \\
-\frac{\sqrt3}{3} & -4 & \frac{16\sqrt3}{3} \\
0 & -\sqrt3 & 4
\end{bmatrix}.
\]
Finally, from the eigenvectors of $T_x$ we derive the roots of $F = 0$: $\{(2.955, 4.015), (-1.242 + 1.423i, 1.592 + 0.822i), (-1.242 - 1.423i, 1.592 - 0.822i)\}$.
\end{example}

\begin{example}
\label{example5}
In this example we consider a non-radical ideal. We use the standard basis $\B$. It is enough to use a set of shifts such that not all the monomials from $\B$ are included. One can then add zero columns to the elimination template. To get the full action matrix, we need to add the permutation matrix from Eq.(\ref{eq:actmat}) to the matrix obtained from the elimination template. Then we derive roots of the system from the action matrix and its eigenvectors. 

Let $J = \langle F \rangle$, where $F = \{f_1, f_2\} = \{x^2 - y^2, y^2 - x\} \subset \mathbb Q[x, y]$. The reduced Gr\"obner basis \wrt grevlex with $x > y$ is $G = \{y^2 - x, x^2 - x\}$ and the standard basis of $\mathbb Q[x, y]/J$ is $\B = \{xy, x, y, 1\}$.

If $y$ is the action variable, then the action matrix is
\[
T_y = \begin{bmatrix}
0 & 0 & 1 & 0 \\
0 & 0 & 1 & 0 \\
1 & 0 & 0 & 0 \\
0 & 1 & 0 & 0
\end{bmatrix}.
\]
Let us construct vector $V$ define in Eq.~\eqref{eq:Vdef}:
\[
V = y\vect{\B} - T_y \vect{\B} = 
\begin{bmatrix}
xy^2 - x \\ y^2 - x \\ 0 \\ 0
\end{bmatrix}.
\]
Since $V \subset J$, see Sec.~\ref{sec:constr}, there exists matrix $H_0$ such that $V = H_0 \vect{F}$. By tracing the computation of the Gr\"obner basis $G$ we found
\[
H_0 = \begin{bmatrix}
1 & x + 1 \\
0 & 1 \\
0 & 0 \\
0 & 0
\end{bmatrix}.
\]
It follows that it is enough to take the set of shifts $A\cdot F = \{xf_2, f_2, f_1\}$. We divide the set of monomials $[X]_{A\cdot F}$ into the subsets $\overline{\B} = \B \cap [X]_{A\cdot F} = \{x\}$, $\mathcal R = \{xy^2, y^2\}$ and $\mathcal E = \{x^2\}$. This yields the elimination template
\begin{multline*}
M(A\cdot F) = \begin{bmatrix} M_{\mathcal E} & M_{\mathcal R} & M_{\overline{\B}}\end{bmatrix} \\=
\kbordermatrix{& x^2 & xy^2 & y^2 & x \\
xf_2 & -1 & 1 & 0 & 0 \\
f_2 & 0 & 0 & 1 & -1 \\
f_1 & 1 & 0 & -1 & 0}.
\end{multline*}
The reduced row echelon form of $M(A\cdot F)$ is the matrix
\[
\widetilde M(A\cdot F) = \left[\begin{array}{c|c|c}
\widetilde M_{\mathcal E} & 0 & *\\\hline
0 & I & \widetilde M_{\overline{\B}}
\end{array} \right] = \left[\begin{array}{c|cc|c}
1 & 0 & 0 & -1 \\\hline
0 & 1 & 0 & -1 \\
0 & 0 & 1 & -1
\end{array}\right].
\]
We can add zero columns corresponding to the basic monomials from $\B \setminus \overline{\B} = \{xy, y, 1\}$ to the matrix $\widetilde M(A\cdot F)$. This yields
\[
\left[\begin{array}{c|c|c}
\widetilde M_{\mathcal E} & 0 & *\\\hline
0 & I & \widetilde M_{\B}
\end{array} \right] =
\left[\begin{array}{c|cc|cccc}
1 & 0 & 0 & 0 & 0 & -1 & 0 \\\hline
0 & 1 & 0 & 0 & 0 & -1 & 0 \\
0 & 0 & 1 & 0 & 0 & -1 & 0
\end{array} \right].
\]
Then the action matrix $T_y$ is read off as $\begin{bmatrix}
-\widetilde M_{\B} \\ P
\end{bmatrix}$, where $P = \begin{bmatrix}
1 & 0 & 0 & 0 \\
0 & 1 & 0 & 0
\end{bmatrix}$ satisfies $\begin{bmatrix}xy \\ x\end{bmatrix} = P \vect{\B}$.

The eigenvalues of $T_y$ are $\{0, \pm 1\}$. The geometric multiplicity of the eigenvalue $\lambda = 0$ equals $1$, whereas its algebraic multiplicity is $2$ implying that $T_y$ is non-diagonalizable. The $y$-components of the roots are $0, -1, 1$. The $x$-components can be derived from the eigenvectors resulting in the following roots: $\{(0, 0), (1, -1), (1, 1)\}$, where the root $(0, 0)$ is of multiplicity~$2$.
\end{example}

\section{Proof of Proposition~\ref{prop:schur}}
\label{sec:schur}

The following proposition validates the Schur complement reduction described in Subsec.~\ref{subsec:schur}.

\begin{prop}
Let $M$ be an elimination template represented in the following block form
\begin{equation}
M = \begin{bmatrix}A & B \\ C & D\end{bmatrix},
\end{equation}
where $A$ is a square invertible matrix and its columns correspond to some excessive monomials. Then the Schur complement of $A$, \ie matrix $M/A = D - CA^{-1}B$, is an elimination template too.
\end{prop}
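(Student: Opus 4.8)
The plan is to show that $M/A$ inherits both defining conditions of an elimination template (Definition~\ref{def:ElimTempl}) from $M$. The key observation is that left-multiplication by an invertible matrix does not change the row echelon structure of a Macaulay matrix in the relevant sense, and that block Gauss--Jordan elimination of the $A$-block is exactly such an operation restricted to the columns outside $A$. Concretely, since $A$ is square and invertible, the block matrix $\bigl[\begin{smallmatrix} I & 0 \\ -CA^{-1} & I\end{smallmatrix}\bigr]$ is invertible, and multiplying $M$ on the left by it yields $\bigl[\begin{smallmatrix} A & B \\ 0 & M/A\end{smallmatrix}\bigr]$. First I would record that the columns of $M$ not belonging to $A$ still correspond to the monomials $\mathcal X \setminus \mathcal E_A$, where $\mathcal E_A \subset \mathcal E$ is the set of excessive monomials indexing $A$; in particular the reducible monomials $\mathcal R$ and the basic monomials $\overline{\mathcal B}$ are untouched, because by hypothesis $A$'s columns are excessive. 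Hence condition~1, $\mathcal R \subset [X]_{A\cdot F}$, is preserved verbatim: the monomials indexing $M/A$ still contain all of $\mathcal R$.

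Next I would verify condition~2. Arrange the columns of $M$ so that the $\mathcal E_A$-columns come first, then the remaining excessive columns $\mathcal E \setminus \mathcal E_A$, then $\mathcal R$, then $\overline{\mathcal B}$; this reorders $B$ and $D$ but changes nothing essential. The rows of $M/A$ span, over $\mathbb K$, the same linear space of polynomials as $\hull(\text{rows of }M)$ intersected with the subspace of polynomials having zero coefficient on every monomial in $\mathcal E_A$ --- this is precisely the content of the block elimination above. Now take the reduced row echelon form $\widetilde M(A\cdot F)$ of $M$, which by assumption has shape~\eqref{eq:tildeM}. I claim the reduced row echelon form of $M/A$ is obtained from $\widetilde M(A\cdot F)$ by deleting the $\mathcal E_A$-columns and any rows that become zero. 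The point is that $\hull(\text{rows of } \widetilde M)$ and $\hull(\text{rows of } M/A)$ both equal the projection of $\hull(\text{rows of } M)$ that kills the $\mathcal E_A$-coordinates: for $\widetilde M$ this is because $\widetilde M$ is already in reduced echelon form so its rows split into those supported on $\mathcal E_A \cup (\text{other cols})$ versus those with zero $\mathcal E_A$-part, and it is the latter rows that survive; for $M/A$ it is the block-elimination identity. Since the $I$-block sitting over $\mathcal R$ in~\eqref{eq:tildeM} involves no $\mathcal E_A$-columns (those are excessive, not reducible) and no pivots of that identity block lie in the deleted columns, that whole block and the associated $\widetilde M_{\overline{\mathcal B}}$ survive intact. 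Therefore the reduced row echelon form of $M/A$ again has the shape $\bigl[\begin{smallmatrix} * & 0 & * \\ 0 & I & \widetilde M_{\overline{\mathcal B}} \\ 0 & 0 & 0\end{smallmatrix}\bigr]$ with the same identity block of order $\#\mathcal R$ and the same $\widetilde M_{\overline{\mathcal B}}$, establishing condition~2.

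The main obstacle, and the step I would spend the most care on, is the bookkeeping that shows the $I \mid \widetilde M_{\overline{\mathcal B}}$ rows are genuinely unaffected --- i.e., that no pivot column of the $\mathcal R$-identity block gets eliminated and that the Schur complement does not accidentally introduce a pivot outside $\mathcal R$ or destroy one inside it. This reduces to the clean fact that $A$'s columns are excessive by hypothesis, so $\mathcal E_A \cap (\mathcal R \cup \overline{\mathcal B}) = \emptyset$, combined with the observation that row operations confined to a column block disjoint from $\mathcal R \cup \overline{\mathcal B}$ cannot change the rank contribution of the $\mathcal R \cup \overline{\mathcal B}$ columns. Once this is phrased in terms of $\hull$ of the row spaces --- exactly as in the proof of Theorem~\ref{thm:ElimTempl} --- the argument is short. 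I would close by noting that the entries of $M/A$ are rational functions of the entries of $M$ (polynomial when $A$ is sparse with constant entries, as in the applications), so $M/A$ is a bona fide Macaulay-type template in the sense required.
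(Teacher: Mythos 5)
Your proof is correct and takes essentially the same route as the paper: both arguments show that the reduced row echelon form of $M/A$ is obtained from $\widetilde M$ by dropping the $\mathcal E_A$-columns and their pivot rows, so that the $[\,0 \mid I \mid \widetilde M_{\overline{\mathcal B}}\,]$ block is preserved. You make one step more explicit than the paper does — using the block-elimination identity $\bigl[\begin{smallmatrix} I & 0 \\ -CA^{-1} & I\end{smallmatrix}\bigr] M = \bigl[\begin{smallmatrix} A & B \\ 0 & M/A\end{smallmatrix}\bigr]$ to identify $\hull(\text{rows of }M/A)$ with $\hull(\text{rows of }M)\cap\{v : v_{\mathcal E_A}=0\}$, which in turn equals the span of the bottom block $\widetilde D$ of $\widetilde M$ — whereas the paper asserts without elaboration that the RREF of $M/A$ is $\widetilde D$ after observing $\widetilde A = I$, $\widetilde C = 0$.
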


\begin{proof}
Recall that an elimination template is partitioned as $M = \begin{bmatrix}M_{\mathcal E} & M_{\mathcal R} & M_{\overline{\B}} \end{bmatrix}$, where $\mathcal E$, $\mathcal R$ and $\overline{\B}$ are the sets of excessive, reducible and basic monomials respectively. By the definition of template, the reduced row echelon form of $M$ must have the form
\[
\widetilde M = \begin{bmatrix}\widetilde M_{\mathcal E} & 0 & * \\ 0 & I & \widetilde M_{\overline{\B}} \\ 0 & 0 & 0 \end{bmatrix},
\]
where $\begin{bmatrix}\widetilde M_{\mathcal E} \\ 0\end{bmatrix}$ is the reduced row echelon form of matrix $M_{\mathcal E}$. On the other hand, according to the block form~\eqref{eq:blockM}, we have
$
\widetilde M = \begin{bmatrix} \widetilde A & \widetilde B \\ \widetilde C & \widetilde D\end{bmatrix},
$
where $\widetilde A$ is a square invertible submatrix of $\widetilde M_{\mathcal E}$. Thus, $\widetilde A = I$ and $\widetilde C = 0$. Let $\mathcal E_A$ be the set of excessive monomials corresponding to the columns of matrix $A$. Then we have $\widetilde M_{\mathcal E} = \begin{bmatrix} I & * \\ 0 & \widetilde M_{\mathcal E \setminus \mathcal E_A}\end{bmatrix}$. It follows that the reduced row echelon form of $M/A$ is
\[
\widetilde{M/A} = \widetilde D = \begin{bmatrix}\widetilde M_{\mathcal E \setminus \mathcal E_A} & 0 & * \\ 0 & I & \widetilde M_{\overline{\B}} \\ 0 & 0 & 0 \end{bmatrix}
\]
and hence $M/A$ is a template.
\end{proof}

\section{Proof of Proposition~\ref{prop:nco}}
\label{sec:nco}

Here we prove a simple necessary condition for a template to be minimal.
\begin{prop}
Let $M''$ be an elimination template of size $s''\times n''$ whose columns arranged \wrt the partition $\mathcal E \cup \mathcal R \cup \overline{\B}$. Then there exists a template $M$ of size $s\times n$ so that $s \leq s''$, $n \leq n''$ and $n - s = \# \overline{\B}$.
\end{prop}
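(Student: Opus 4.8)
The plan is to obtain $M$ from $M''$ by two successive prunings: first drop a maximal set of linearly dependent rows, then drop a maximal set of linearly dependent columns among the excessive monomials. Throughout I would use ``template'' in the sense of the two conditions of Definition~\ref{def:ElimTempl} imposed on a matrix whose columns carry monomial labels split as $\mathcal E\cup\mathcal R\cup\overline{\B}$, since this is exactly what the action-matrix construction of Sec.~\ref{subsec:actmat} relies on.

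First I would read the rank of $M''$ off its reduced row echelon form~\eqref{eq:tildeM}. Setting $r=\rank M''_{\mathcal E}$, the form~\eqref{eq:tildeM} carries $r$ pivots in the $\mathcal E$-columns and a full identity of order $\#\mathcal R$ in the $\mathcal R$-columns, so $\rank M'' = r+\#\mathcal R$. Selecting $r+\#\mathcal R$ linearly independent rows of $M''$ and letting $M'$ be that submatrix, $M'$ has the same row space as $M''$; hence its reduced row echelon form is that of $M''$ with the zero rows struck out, again of shape~\eqref{eq:tildeM} but with empty bottom block, so $M'$ is a template with $r+\#\mathcal R\le s''$ rows and $n''$ columns.

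Next I would observe that, inside $M'$, the $r$ columns of the $\mathcal E$-block that are pivot columns of $M'$ are linearly independent and span the column space of the whole $\mathcal E$-block: in the reduced row echelon form of $M'$ they are the first $r$ standard basis vectors, while every other $\mathcal E$-column is supported in those first $r$ coordinates (the zero block beneath $I_{\#\mathcal R}$ in~\eqref{eq:tildeM}), and linear (in)dependence of columns is unaffected by the invertible row reduction. Let $M$ be $M'$ with all $\mathcal E$-columns except these $r$ removed. Since the removed columns are non-pivot columns of $M'$, striking them out of the reduced row echelon form of $M'$ again yields a matrix in reduced row echelon form, namely
\[
\begin{bmatrix} I_{r} & 0 & * \\ 0 & I_{\#\mathcal R} & \widetilde M_{\overline{\B}}\end{bmatrix},
\]
which has the shape~\eqref{eq:tildeM}; the $\mathcal R$- and $\overline{\B}$-blocks, in particular $\widetilde M_{\overline{\B}}$, are untouched and $\mathcal R$ still indexes columns of $M$, so $M$ is a template. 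Then $s=r+\#\mathcal R\le s''$, $n=r+\#\mathcal R+\#\overline{\B}\le\#\mathcal E+\#\mathcal R+\#\overline{\B}=n''$ because $r\le\#\mathcal E$, and $n-s=\#\overline{\B}$.

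The one step needing genuine care is the last pruning: one must verify that deleting non-pivot columns from a matrix in reduced row echelon form leaves it in reduced row echelon form (leading ones stay leading, stay the sole nonzero entry in their column, and stay in staircase order), and one must make sure the deleted columns are non-pivot in the \emph{entire} matrix $M'$ — which is why it is cleanest to delete exactly those $\mathcal E$-columns of $M'$ that fail to be pivot columns of $M'$. The remainder is bookkeeping with ranks and the block form~\eqref{eq:tildeM}; in passing, the row-pruning step loses no columns, since any column having a nonzero entry only in deleted (dependent) rows would in fact be identically zero.
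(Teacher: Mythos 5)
Your argument is correct and follows essentially the same route as the paper's proof: first pass to a maximal linearly independent set of rows to get $M'$, then delete from $M'$ exactly the $\mathcal E$-columns that are non-pivot columns of $M'$, and observe that the reduced row echelon form still has the shape~\eqref{eq:tildeM}. The only real difference is that you spell out (and verify by hand) the facts that the paper cites from \cite[p.~136]{meyerLA} — that deleting non-pivot columns preserves the reduced row echelon form of the remaining columns — and you add the small sanity check that no column can become identically zero when dependent rows are dropped; both are welcome but do not change the argument.
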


\begin{proof}
Let $M''$ be an elimination template of size $s''\times n''$. First, we take a maximal subset of independent rows of $M''$ to get template $M'$ of size $s\times n'$ with $s \leq s'$.

Let $M'$ be partitioned as follows
\[
M' = \begin{bmatrix}M'_{\mathcal E} & M'_{\mathcal R} & M'_{\overline{\B}} \end{bmatrix}.
\]
As $M'$ is an elimination template, its reduced row echelon form must be as follows
\[
\widetilde M' = \begin{bmatrix}\widetilde M'_{\mathcal E} & 0 & * \\0 & I & * \end{bmatrix},
\]
where $I$ is the identity matrix of order $\# \mathcal R$. Removing the columns from $M'_{\mathcal E}$ that do not have pivots in $\widetilde M'_{\mathcal E}$ results in matrix $M$ of size $m\times n$, where $n - s = \# \overline{\B}$. Clearly, matrix $M$ is also an elimination template as its reduced row echelon form is given by
\[
\widetilde M = \begin{bmatrix}I & 0 & * \\0 & I & * \end{bmatrix}.
\]
Since the columns from $M'_{\mathcal E}$ that do not have pivots in $\widetilde M'_{\mathcal E}$ do not change the reduced row echelon form of the rest of the matrix~\cite[p.~136]{meyerLA}, it follows that the right most $\# \overline{\B}$ columns in $\widetilde M$ are exactly the same as in $\widetilde M'$.
\end{proof}

\begin{table*}
\centering
\footnotesize
\begin{tabular}{r|ccccc}
\hline\\
Problem \# &
3 (nstd) & 9 (std) & 10 (std) & 15 (std) & 16 (std) \\
\begin{tabular}{c}Error distrib.\vspace{60pt}\end{tabular}\vspace{-30pt} &
\ig{0.145}{fig2-3} &
\ig{0.145}{fig2-9} &
\ig{0.145}{fig2-10} &
\ig{0.145}{fig2-15} &
\ig{0.145}{fig2-16} \\
Template size &
$11 \tm 26$ & $76 \tm 100$ & $55 \tm 74$ & $57 \tm 73$ & $65 \tm 85$ \\
$\# \mathcal P$ &
15 & 39 & 30 & 54 & 35 \\
Med. error & 3.30e--13 & 8.08e--11 & 4.19e--13 & 1.04e--12 & 3.41e--13 \\
Ave. time (ms) & $0.4$ & $1.2$ & $3.1$ & $1.0$ & $0.9$ \\
\hline\\
Problem \# &
17 (nstd) & 20 (std) & 21 (std) & 22 (nstd) & 23 (std) \\
\begin{tabular}{c}Error distrib.\vspace{60pt}\end{tabular}\vspace{-30pt} &
\ig{0.145}{fig2-17} &
\ig{0.145}{fig2-20} &
\ig{0.145}{fig2-21} &
\ig{0.145}{fig2-22} &
\ig{0.145}{fig2-23} \\
Template size &
$139 \tm 155$ & $139 \tm 163$ & $99 \tm 119$ & $31 \tm 47$ & $18 \tm 28$ \\
$\# \mathcal P$ &
40 & 68 & 48 & 20 & 10 \\
Med. error & 5.52e--12 & 4.67e--11 & 7.95e--13 & 1.71e--13 & 3.06e--13 \\
Ave. time (ms) & $2.5$ & $3.6$ & $2.0$ & $0.5$ & $0.9$ \\
\hline\\
Problem \# &
28 (std) & 29 (std) & 31 (std) & 32 (std) & 33 (std) \\
\begin{tabular}{c}Error distrib.\vspace{60pt}\end{tabular}\vspace{-30pt} &
\ig{0.145}{fig2-28} &
\ig{0.145}{fig2-29} &
\ig{0.145}{fig2-31} &
\ig{0.145}{fig2-32} &
\ig{0.145}{fig2-33} \\
Template size &
$120 \tm 140$ & $134 \tm 162$ & $217 \tm 248$ & $126 \tm 162$ & $209 \tm 277$ \\
$\# \mathcal P$ &
80 & 76 & 85 & 67 & 117 \\
Med. error & 6.11e--13 & 1.63e--12 & 1.31e--12 & 2.09e--11 & 3.77e--08 \\
Ave. time (ms) & $2.6$ & $3.7$ & $4.2$ & $2.3$ & $8.5$ \\
\hline
\end{tabular}
\caption{Tests of numerical accuracy and runtime for some our minimal solvers from Tab.~\ref{tab:templates} and Tab.~\ref{tab:templates2} of the main paper. Each histogram shows $\log_{10}$ of numerical error distribution on $10^4$ trials. It is also shown the template size of each problem and the number of permissible monomials ($\# \mathcal P$) used for the column pivoting strategy from~\cite{byrod2009fast}, see Sec.~\ref{sec:pivot}. For problems \#3 and \#23 the column pivoting was not applied as for those problems $\mathcal P$ is exactly the set of basic monomials. The runtime includes both constructing the coefficient matrix of the initial system and finding its solutions.}
\label{tab:stat}
\end{table*}

\section{Notes on column pivoting}
\label{sec:pivot}

In Subsect.~\ref{subsec:actmat} of the main paper, we read off the action matrix from the reduced row echelon form of the elimination template. For large elimination templates, this method may be impractical for the following two reasons. First, it is slow since constructing the full reduced row echelon form is time-consuming. Second, this approach is often numerically unstable. This means that due to round-off and truncation errors the output roots, when back substituted into the initial polynomials, result in values that are far from being zeros.

Here we recall an alternative approach from~\cite{byrod2007improving,byrod2008column,byrod2009fast} for the action matrix construction. This approach is faster than the one based on the reduced row echelon form and moreover it admits a numerically more accurate generalization.

Let $M$ be an elimination template partitioned as $M = \begin{bmatrix}M_{\mathcal E} & M_{\mathcal R} & M_{\B} \end{bmatrix}$, where $\mathcal E$, $\mathcal R$ and $\B$ are the sets of excessive, reducible and basic monomials respectively. Let the set of basic monomials $\mathcal B$ be partitioned as $\mathcal B = \mathcal B_1 \cup \mathcal B_2$, where $\mathcal B_2 = \{a\, b \cln b \in \mathcal B\} \cap \mathcal B$ and $\mathcal B_1 = \mathcal B \setminus \mathcal B_2$.

The LU decomposition of matrix $M_{\mathcal E}$ can be generally written as $M_{\mathcal E} = \begin{bmatrix}\Pi_{\mathcal E} L_{\mathcal E} & 0 \\ 0 & I\end{bmatrix} \begin{bmatrix}U_{\mathcal E} \\ 0\end{bmatrix}$, where $U_{\mathcal E}$ and $L_{\mathcal E}$ are upper- and lower-triangular matrices respectively, $\Pi_{\mathcal E}$ is a row permutation matrix. Then we define
\[
M' = \begin{bmatrix}(\Pi_{\mathcal E} L_{\mathcal E})^{-1} & 0 \\ 0 & I\end{bmatrix} M = \begin{bmatrix}U_{\mathcal E} & * & * \\ 0 & M'_{\mathcal R} & M'_{\B}\end{bmatrix},
\]
where $M'_{\mathcal R}$ is square and invertible. It follows that $M'_{\mathcal R} \vect{\mathcal R} = -M'_{\B}\vect{\B}$ and hence the action matrix reads
\[
T_a = \begin{bmatrix} -(M'_{\mathcal R})^{-1}M'_{\B} \\ P \end{bmatrix},
\]
where $P$ is a binary matrix, \ie a matrix consisting of $0$ and $1$, such that $\vect{\B_2} = P\vect{\B}$.

\begin{table*}
\centering
\footnotesize
\begin{tabular}{r|ccccc}
\hline\\
Problem \# &
1 (nstd) & 2 (std) & 4 (nstd) & 5 (nstd) & 6 (std) \\
\begin{tabular}{c}Error distrib.\vspace{60pt}\end{tabular}\vspace{-30pt} &
\ig{0.145}{fig2-1} &
\ig{0.145}{fig2-2} &
\ig{0.145}{fig2-4} &
\ig{0.145}{fig2-5} &
\ig{0.145}{fig2-6} \\
Template size &
$7 \tm 15$ & $11 \tm 20$ & $14 \tm 40$ & $18 \tm 36$ & $52 \tm 68$ \\
$\# \mathcal P$ &
8 & 12 & 30 & 30 & 38 \\
Med. error & 3.67e--15 & 3.52e--14 & 4.64e--13 & 9.10e--14 & 2.46e--13 \\
Ave. time (ms) & $0.2$ & $0.5$ & $1.5$ & $0.5$ & $0.8$ \\
\hline\\
Problem \# &
7 (std) & 8 (nstd) & 11 (nstd) & 12 (std) & 13 (std) \\
\begin{tabular}{c}Error distrib.\vspace{60pt}\end{tabular}\vspace{-30pt} &
\ig{0.145}{fig2-7} &
\ig{0.145}{fig2-8} &
\ig{0.145}{fig2-11} &
\ig{0.145}{fig2-12} &
\ig{0.145}{fig2-13} \\
Template size &
$28 \tm 40$ & $39 \tm 95$ & $22 \tm 41$ & $51 \tm 70$ & $47 \tm 55$ \\
$\# \mathcal P$ &
20 & 75 & 26 & 35 & 21 \\
Med. error & 2.87e--14 & 2.63e--09 & 1.11e--12 & 4.11e--13 & 5.99e--14 \\
Ave. time (ms) & $0.8$ & $3.6$ & $0.7$ & $7.2$ & $0.7$ \\
\hline\\
Problem \# &
14 (std) & 18 (std) & 19 (std) & 24 (std) & 25 (nstd) \\
\begin{tabular}{c}Error distrib.\vspace{60pt}\end{tabular}\vspace{-30pt} &
\ig{0.145}{fig2-14} &
\ig{0.145}{fig2-18} &
\ig{0.145}{fig2-19} &
\ig{0.145}{fig2-24} &
\ig{0.145}{fig2-25} \\
Template size &
$87 \tm 114$ & $87 \tm 114$ & $118 \tm 158$ & $47 \tm 55$ & $16 \tm 36$ \\
$\# \mathcal P$ &
80 & 80 & 40 & 21 & 25 \\
Med. error & 2.24e--12 & 8.00e--13 & 3.01e--09 & 6.10e--14 & 4.78e--14 \\
Ave. time (ms) & $2.4$ & $2.4$ & $2.9$ & $0.7$ & $0.7$ \\
\hline\\
Problem \# &
26 (std) & 27 (nstd) & 30 (nstd) & 34 (std) \\
\begin{tabular}{c}Error distrib.\vspace{60pt}\end{tabular}\vspace{-30pt} &
\ig{0.145}{fig2-26} &
\ig{0.145}{fig2-27} &
\ig{0.145}{fig2-30} &
\ig{0.145}{fig2-34} \\
Template size &
$37 \tm 81$ & $40 \tm 46$ & $385 \tm 433$ & $144 \tm 284$ \\
$\# \mathcal P$ &
50 & 7 & 82 & 165 \\
Med. error & 2.96e--12 & 9.27e--13 & 1.09e--08 & 7.47e--07 \\
Ave. time (ms) & $3.1$ & $0.6$ & $19.8$ & $97$ \\
\hline
\end{tabular}
\caption{A continuation of Tab.~\ref{tab:stat} for the remaining 19 minimal solvers. Each histogram shows $\log_{10}$ of numerical error distribution on $10^4$ trials. It is also shown the template size of each problem and the number of permissible monomials ($\# \mathcal P$) used for the column pivoting strategy from~\cite{byrod2009fast}, see Sec.~\ref{sec:pivot}. For problem \#1 the column pivoting was not applied as for this problem $\mathcal P$ is exactly the set of basic monomials. For problem \#19 the column pivoting was not applied as it led to worse results. The runtime includes both constructing the coefficient matrix of the initial system and finding its solutions}
\label{tab:stat3}
\end{table*}

As it was noted in~\cite{byrod2009fast}, matrix $M'_{\mathcal R}$ is often ill conditioned and this is the main cause of numerical instabilities in solving polynomial systems. Also in~\cite{byrod2009fast} the authors proposed the following heuristic method of improving stability. First, the set of basic monomials $\B$ is replaced with the set of \emph{permissible monomials} $\mathcal P = \{p \in \mathcal X \cln a p \in \mathcal X\}$. The partitions for $\mathcal X$ and $M$ now become
\[
\mathcal X = \mathcal E \cup \mathcal R \cup \mathcal P
\quad\text{and}\quad
M = \begin{bmatrix}M_{\mathcal E} & M_{\mathcal R} & M_{\mathcal P} \end{bmatrix}
\]
respectively. Here $\mathcal R = \{a p \cln p \in \mathcal P\} \setminus \mathcal P$ and $\mathcal E$ consists of monomials which are neither in $\mathcal R$ nor in $\mathcal P$. Then the LU decomposition is applied to matrix $\begin{bmatrix}M_{\mathcal E} & M_{\mathcal R}\end{bmatrix}$:
\[
M' = \begin{bmatrix}U_{\mathcal E} & * & * \\ 0 & U_{\mathcal R} & M'_{\mathcal P} \\ 0 & 0 & N'_{\mathcal P}\end{bmatrix},
\]
where $U_{\mathcal E}$, $U_{\mathcal R}$ are upper-triangular matrices and $U_{\mathcal R}$ is square and invertible. This is the starting point for the column pivoting strategy. Let the (pivoted) QR decomposition of matrix $N'_{\mathcal P}$ be
\[
N'_{\mathcal P}\Pi = Q\begin{bmatrix}U_{\mathcal P\setminus \B} & N''_{\B}\end{bmatrix},
\]
where $\Pi$ is the column permutation matrix, $Q$ is orthogonal matrix, $U_{\mathcal P\setminus \B}$ is upper-triangular, square and invertible. Pivoting defined by the matrix $\Pi$ helps to reduce the condition number of $U_{\mathcal P\setminus \B}$ and hence makes the further computation of its inverse matrix numerically more accurate. Let us define
\begin{multline*}
M'' = \begin{bmatrix}I & 0 & 0 \\ 0 & I & 0 \\ 0 & 0 & Q^\top\end{bmatrix} M' \begin{bmatrix}I & 0 & 0 \\ 0 & I & 0 \\ 0 & 0 & \Pi\end{bmatrix} \\= \begin{bmatrix}U_{\mathcal E} & * & * & * \\ 0 & U_{\mathcal R} & M''_{\mathcal P\setminus \B} & M''_{\B} \\ 0 & 0 & U_{\mathcal P\setminus \B} & N''_{\B}\end{bmatrix},
\end{multline*}
where $M'_{\mathcal P}\Pi = \begin{bmatrix}M''_{\mathcal P\setminus \B} & M''_{\B}\end{bmatrix}$. If $\Pi^\top \vect{\mathcal P} = \begin{bmatrix}\vect{\mathcal P\setminus \B} \\ \vect{\B}\end{bmatrix}$, then it follows that
\begin{multline}
\label{eq:RfromBnew}
\begin{bmatrix}\vect{\mathcal R} \\ \vect{\mathcal P\setminus \B}\end{bmatrix} = -\begin{bmatrix}U_{\mathcal R} & M''_{\mathcal P\setminus \B} \\ 0 & U_{\mathcal P\setminus \B}\end{bmatrix}^{-1} \begin{bmatrix}M''_{\B} \\ N''_{\B}\end{bmatrix} \vect{\B} \\= -\begin{bmatrix}U_{\mathcal R}^{-1}M''_{\B} - (U_{\mathcal R}^{-1}M''_{\mathcal P\setminus \B})\,(U_{\mathcal P\setminus \B}^{-1}\,N''_{\B}) \\ U_{\mathcal P\setminus \B}^{-1}N''_{\B}\end{bmatrix} \vect{\B}.
\end{multline}
We note that the set of basic monomials $\B$ depends on the permutation $\Pi$, which in turn depends on the entries of template $M$. Therefore, in general $\B$ can vary depending on problem instance. Since any multiple $a\, b$ for $b \in \B$ belongs to $\mathcal R \cup \mathcal P$, it follows that the action matrix for the new basis $\B$ can be read off from~\eqref{eq:RfromBnew}.

The column pivoting is a universal tool that may significantly enhance numerical accuracy with a certain computational overhead. It can be always applied provided that $\#\mathcal P > \#\B$.

\section{Experimental results}
\label{sec:results}

In this section we test the speed and numerical accuracy of our Matlab solvers for all the minimal problems from Tab.~\ref{tab:templates} and Tab.~\ref{tab:templates2} of the main paper. The experiments were performed on a system with Intel Core i5 CPU @ 2.3~GHz and 8~GB of RAM. The results are presented in Tab.~\ref{tab:stat} and Tab.~\ref{tab:stat3}.

In case the templates for standard and non-standard bases had the same size, we chose the one with smaller numerical error. The column pivoting strategy (see Sec.~\ref{sec:pivot}) was applied for all solvers with $\#\mathcal P > \#\B$. However, for some problems, the set of permissible monomials was manually reduced to improve the speed/accuracy trade-off.

Finally, the numerical error is defined as follows. Let the polynomial system $F = 0$ be written in the form $M(F) U = 0$, where $M(F)$ and $U = \vect{[X]_F}$ are the Macaulay matrix and monomial vector respectively. The matrix $M(F)$ is normalized so that each its row has unit length. Let $\dim \mathbb K[X]/\langle F \rangle = d$, $i$ number all solutions to $F = 0$ including complex ones and $U_i$ be the monomial vector $U$ evaluated at the $i$th solution. We measure the numerical error of our solvers by the value
\[
\Bigl\|M(F) \begin{bmatrix}\frac{U_1}{\|U_1\|_2} & \ldots & \frac{U_d}{\|U_d\|_2} \end{bmatrix}\Bigr\|_2,
\]
where $\|\cdot\|_2$ is the Frobenius norm.

\end{document}